\newif\ifapx
\newcommand{\otoprule }{\midrule[\heavyrulewidth]}
\newcommand{\xmark}{\ding{55}}%
\newcommand{\ourmethod}{\textsc{flexi}\xspace}
\newcommand{\qr}{\mathit{qr}}
\newcommand{\WRAcc}{\mathit{WRAcc}}
\newcommand{\kl}{\mathit{kl}}
\newcommand{\hd}{\mathit{hd}}
\newcommand{\zscore}{\mathit{z\textit{-}score}}
\newcommand{\id}{\mathit{ID}}
\newcommand{\alter}{\textsc{sum}\xspace}
\newcommand{\smdl}{\textsc{sd}\xspace}
\newcommand{\unml}{\textsc{ud}\xspace}
\newcommand{\ipd}{\textsc{ipd}\xspace}
\newcommand{\RocInt}{\textsc{roc}\xspace}
\newcommand{\ef}{\textsc{ef}\xspace}
\newcommand{\ew}{\textsc{ew}\xspace}
\newcommand{\wflexi}{\textsc{flexi}$_{\mathit{w}}$\xspace}
\newcommand{\zflexi}{\textsc{flexi}$_{\mathit{z}}$\xspace}
\newcommand{\kflexi}{\textsc{flexi}$_{\mathit{k}}$\xspace}
\newcommand{\hflexi}{\textsc{flexi}$_{\mathit{h}}$\xspace}
\newcommand{\qflexi}{\textsc{flexi}$_{\mathit{q}}$\xspace}
\newcommand{\D}{\mathbf{D}}
\newcommand{\da}{\mathbf{d}}
\newcommand{\Ab}{\mathbf{A}}
\newcommand{\A}{A}
\newcommand{\Tb}{\mathbf{T}}
\newcommand{\T}{T}
\newcommand{\size}{m}
\newcommand{\dima}{n}
\newcommand{\dimt}{d}
\newcommand{\Subsett}{\mathcal{S}}
\newcommand{\Sub}{\mathit{S}}
\newcommand{\Subc}{\overline{\Sub}}
\newcommand{\Sube}{\mathit{R}}
\newcommand{\Dsc}{\mathcal{F}}
\newcommand{\diff}{\mathit{div}}
\newcommand{\countPos}{\mathit{countPos}}
\newcommand{\binMean}{\mathit{binMean}}
\newcommand{\pos}{\mathit{pos}}
\newcommand{\dom}{\mathit{dom}}
\newcommand{\qual}{\mathit{qual}}
\newcommand{\inter}{\mathit{int}}
\newcommand{\maxv}{\mathit{V}}
\newcommand{\minv}{\mathit{v}}
\newcommand{\dsc}{\mathit{dsc}}
\newcommand{\proofApx}{
\begin{proof}
\ifapx 
 We postpone the proof to Appendix~\ref{sec:proofs}.
\else 
 We postpone the proof to the online Appendix.
\fi 
\end{proof}
}
\newcommand{\cmark}{\checkmark}
\renewcommand*{\@fnsymbol}[1]{\ensuremath{\ifcase#1\or   \circ\or \bullet\or *\or \ddagger\or
   \mathsection\or \mathparagraph\or \|\or **\or \dagger\dagger
   \or \ddagger\ddagger \else\@ctrerr\fi}}
\tikzstyle{block} = [rounded corners, draw=blue!70, fill=white, text width=3.3cm, minimum height=4em]
\tikzstyle{bgblock} = [rounded corners, draw=blue!70, thick, fill=blue!10, text width=3.3cm, minimum height=4em]
\tikzstyle{line} = [draw, -latex', thick,blue!70]
\definecolor{yafaxiscolor}{rgb}{0.3, 0.3, 0.3}
\definecolor{yafcolor1}{rgb}{0.4, 0.165, 0.553}
\definecolor{yafcolor2}{rgb}{0.949, 0.482, 0.216}
\definecolor{yafcolor3}{rgb}{0.47, 0.549, 0.306}
\definecolor{yafcolor4}{rgb}{0.925, 0.165, 0.224}
\definecolor{yafcolor5}{rgb}{0.141, 0.345, 0.643}
\definecolor{yafcolor6}{rgb}{0.965, 0.933, 0.267}
\definecolor{yafcolor7}{rgb}{0.627, 0.118, 0.165}
\definecolor{yafcolor8}{rgb}{0.878, 0.475, 0.686}
\newlength{\yafaxispad}
\newlength{\yaftlpad}
\newlength{\yaflabelpad}
\newlength{\yafaxiswidth}
\newlength{\yafticklen}
\def\pgfplots@drawtickgridlines@INSTALLCLIP@onorientedsurf#1{}
\newcommand{\yafdrawaxes}[4]{
	\pgfplotstransformcoordinatex{#1}\let\xmincoord=\pgfmathresult 
	\pgfplotstransformcoordinatex{#2}\let\xmaxcoord=\pgfmathresult 
	\pgfplotstransformcoordinatey{#3}\let\ymincoord=\pgfmathresult 
	\pgfplotstransformcoordinatey{#4}\let\ymaxcoord=\pgfmathresult 
	\pgfsetlinewidth{\yafaxiswidth} 
	\pgfsetcolor{yafaxiscolor}
	\pgfpathmoveto{\pgfpointadd{\pgfpointadd{\pgfplotspointrelaxisxy{0}{0}}{\pgfqpointxy{\xmincoord}{0}}}{\pgfqpoint{-0.5\yafaxiswidth}{\yafaxispad}}}
	\pgfpathlineto{\pgfpointadd{\pgfpointadd{\pgfplotspointrelaxisxy{0}{0}}{\pgfqpointxy{\xmaxcoord}{0}}}{\pgfqpoint{0.5\yafaxiswidth}{\yafaxispad}}}
	\pgfpathmoveto{\pgfpointadd{\pgfpointadd{\pgfplotspointrelaxisxy{0}{0}}{\pgfqpointxy{0}{\ymincoord}}}{\pgfqpoint{\yafaxispad}{-0.5\yafaxiswidth}}}
	\pgfpathlineto{\pgfpointadd{\pgfpointadd{\pgfplotspointrelaxisxy{0}{0}}{\pgfqpointxy{0}{\ymaxcoord}}}{\pgfqpoint{\yafaxispad}{0.5\yafaxiswidth}}}
	\pgfusepath{stroke}
}
\newcommand{\yafdrawYaxis}[2]{
	\pgfplotstransformcoordinatey{#1}\let\ymincoord=\pgfmathresult 
	\pgfplotstransformcoordinatey{#2}\let\ymaxcoord=\pgfmathresult 
	\pgfsetlinewidth{\yafaxiswidth} 
	\pgfsetcolor{yafaxiscolor}
	\pgfpathmoveto{\pgfpointadd{\pgfpointadd{\pgfplotspointrelaxisxy{0}{0}}{\pgfqpointxy{0}{\ymincoord}}}{\pgfqpoint{\yafaxispad}{-0.5\yafaxiswidth}}}
	\pgfpathlineto{\pgfpointadd{\pgfpointadd{\pgfplotspointrelaxisxy{0}{0}}{\pgfqpointxy{0}{\ymaxcoord}}}{\pgfqpoint{\yafaxispad}{0.5\yafaxiswidth}}}
	\pgfusepath{stroke}
}
\newcommand{\yafdrawaxisLimits}[4]{
	\pgfplotstransformcoordinatex{#1}\let\xmincoord=\pgfmathresult
	\pgfplotstransformcoordinatex{#2}\let\xmaxcoord=\pgfmathresult
	\pgfplotstransformcoordinatey{#3}\let\ymincoord=\pgfmathresult 
	\pgfplotstransformcoordinatey{#4}\let\ymaxcoord=\pgfmathresult 
	\pgfsetlinewidth{\yafaxiswidth} 
	\pgfsetcolor{yafaxiscolor}
	\pgfpathmoveto{
		\pgfpointadd{
			\pgfpointadd{
				\pgfplotspointrelaxisxy{0}{0}}{
				\pgfqpointxy{\xmincoord}{0}
			}
		}{
			\pgfqpoint{
				-0.5\yafaxiswidth}{
				\yafaxispad
			}
		}
	}
	\pgfpathlineto{
		\pgfpointadd{
			\pgfpointadd{
				\pgfplotspointrelaxisxy{0}{0}
			}{
				\pgfqpointxy{
					\xmaxcoord
				}{0}
			}
		}{
			\pgfqpoint{
				25.5\yafaxiswidth
			}{
				\yafaxispad
			}
		}
	}
	\pgfpathmoveto{\pgfpointadd{\pgfpointadd{\pgfplotspointrelaxisxy{0}{0}}{\pgfqpointxy{0}{\ymincoord}}}{\pgfqpoint{\yafaxispad}{-0.5\yafaxiswidth}}}
	\pgfpathlineto{\pgfpointadd{\pgfpointadd{\pgfplotspointrelaxisxy{0}{0}}{\pgfqpointxy{0}{\ymaxcoord}}}{\pgfqpoint{\yafaxispad}{0.5\yafaxiswidth}}}
	\pgfusepath{stroke}
}
\pgfplotsset{jv ybar/.style={
   ybar, 
   cycle list name=yaf fill,
   xtick = \empty,
   every extra x tick/.style={major tick length=0pt,color=black}, 
   xmajorgrids = false,
}}
\pgfplotsset{jv line/.style={
   no markers,
   cycle list name=yaf,
   log ticks with fixed point,
   y tick label style = {/pgf/number format/set thousands separator = {\,}},
}}
\pgfplotsset{jv line ylog/.style={
   jv line,
}}
\pgfplotsset{axis y line=left, axis x line=bottom,
	tick align=outside,
	tickwidth=\yafticklen,
	clip = false,
    x axis line style= {-, line width = 0pt, color=black!0},
    y axis line style= {-, line width = 0pt, color=black!0},
    x tick style= {line width = \yafaxiswidth, color=yafaxiscolor, yshift = \yafaxispad},
    y tick style= {line width = \yafaxiswidth, color=yafaxiscolor, xshift = \yafaxispad},
    x tick label style = {font=\scriptsize, yshift = \yaftlpad},
    y tick label style = {font=\scriptsize, xshift = \yaftlpad},
    every axis y label/.style = {at = {(ticklabel cs:0.5)}, rotate=90, anchor=center, font=\scriptsize, yshift = -\yaflabelpad},
    every axis x label/.style = {at = {(ticklabel cs:0.5)}, anchor=center, font=\scriptsize, yshift = \yaflabelpad},
    x tick label style = {font=\scriptsize, yshift = 1pt},
    grid = major,
    major grid style  = {dash pattern = on 1pt off 3 pt},
	every axis plot post/.append style= {line width=\yafaxiswidth} ,
	legend cell align = left,
	legend style = {inner sep = 1pt, cells = {font=\scriptsize}},
	legend image code/.code={%
		\draw[mark repeat=2,mark phase=2,#1] 
		plot coordinates { (0cm,0cm) (0.15cm,0cm) (0.3cm,0cm) };%
	} 
}
\newcommand{\ourmaintitle}{Flexibly Mining Better Subgroups}
\newcommand{\ourtitle}{\ourmaintitle}
\newcommand{\oururl}{\url{http://eda.mmci.uni-saarland.de/flexi/}}
\newcommand{\codeurl}{\oururl}
\begin{document}

\title{\ourtitle}

\author{
Hoang-Vu Nguyen\thanks{Max Planck Institute for Informatics and Saarland University, Germany. Email: \texttt{\{hnguyen,jilles\}@mpi-inf.mpg.de}} \hspace{2.0cm}
Jilles Vreeken\footnotemark[1]
}

\date{}

\maketitle

\begin{abstract}
\small\baselineskip=9pt%
In subgroup discovery, also known as supervised pattern mining, discovering high quality one-dimensional subgroups and refinements of these is a crucial task. For nominal attributes, this is relatively straightforward, as we can consider individual attribute values as binary features. For numerical attributes, the task is more challenging as individual numeric values are not reliable statistics. Instead, we can consider combinations of adjacent values, i.e. bins. Existing binning strategies, however, are not tailored for subgroup discovery. That is, they do not directly optimize for the quality of subgroups, therewith potentially degrading the mining result. 

To address this issue, we propose \ourmethod. In short, with \ourmethod we propose to use optimal binning to find high quality binary features for \textit{both} numeric and ordinal attributes. We instantiate \ourmethod with various quality measures and show how to achieve efficiency accordingly. Experiments on both synthetic and real-world data sets show that \ourmethod outperforms state of the art with up to 25 times improvement in subgroup quality.

\end{abstract}

\section{Introduction} \label{sec:intro}

Subgroup discovery aims at finding subsets of the data, called subgroups, with high statistical unusualness with respect to the distribution of target variable(s)~\cite{wrobel:subgroup,kloesgen:subgroup,henrik:subgroup2}. It has applications in many areas, e.g.\ spatial analysis~\cite{kloesgen:subgroup}, marketing campaign management~\cite{lavrac:subgroup}, and health care~\cite{wouter:subgroup3}.

A crucial part of the subgroup discovery process is the extraction of high quality binary features out of existing attributes. By binary features, we mean features whose values are either true or false. For instance, possible binary features of \textit{Age} attribute are \textit{Age} $\geq 50$ and $20 \leq$ \textit{Age} $\leq 30$. These features constitute one-dimensional subgroups or one-dimensional refinements of subgroups, which are used by many existing search schemes (e.g.\ beam search)~\cite{henrik:subgroup1,leeuwen:subgroup1,wouter:subgroup1}.

Deriving such features is straightforward for \textit{nominal} attributes, e.g.\ their individual values can be used directly as binary features~\cite{mampaey:subgroup}. This also is the case for \textit{ordinal} attributes if one is to treat them as nominal; the downside is that their ordinal nature is not used. The task, however, becomes more challenging for \textit{numerical} (e.g.\ real-valued) attributes. For such an attribute, binary features formed by single values statistically and empirically are not reliable; they tend to have low generality. Thus, one usually switches to combinations of adjacent values, i.e.\ \textit{bins}.

To this end, we observe three challenges that are in the way of finding high quality bins, i.e.\ binary features, for subgroup discovery. First, we need a problem formulation tailored to this purpose. Commonly used binning strategies such as equal-width and equal-frequency are oblivious of subgroup quality, impacting quality of the final output. Second, we should not place any restriction on the target; be it univariate or multivariate; nominal, ordinal, or numeric. Existing solutions also do not address this issue. For instance, \smdl~\cite{fayyad:discr} used in~\cite{henrik:subgroup2} requires that the target is univariate and nominal. Likewise, \RocInt~\cite{mampaey:subgroup} requires a univariate target. Third, the solution should scale well in order to handle large data sets. This means that we need new methods that can handle the first two issues and are efficient.


In this paper, we aim at tackling these challenges. We do so by proposing \ourmethod, for flexible subgroup discovery. In short, \ourmethod formulates the search of binary features per numeric/ordinal attribute as identifying the features with \textit{maximal average quality}. This formulation meets the generality requirement since it does not make any assumption on the target. We instantiate \ourmethod with various quality measures and show how to achieve efficiency accordingly. Extensive experiments on large real-world data sets show that \ourmethod outperforms state of the art, providing up to 25 times improvement in terms of subgroup quality. Furthermore, \ourmethod scales very well on large data sets.

The road map of this paper is as follows. In Section~\ref{sec:pre}, we present preliminaries. In Section~\ref{sec:one}, we introduce \ourmethod. In Sections~\ref{sec:div} and~\ref{sec:scale}, we plug different quality measures into our method and explain how to achieve efficiency. In Section~\ref{sec:rl}, we review related work. We present the experimental results in Section~\ref{sec:exp}. In Section~\ref{sec:dis} we round up with a discussion and conclude the paper in Section~\ref{sec:con}. For readability, we put all proofs in the appendix.

\section{Preliminaries} \label{sec:pre}

Let us consider a data set $\D$ of size $\size$ with attributes $\Ab = \{\A_1, \ldots, \A_\dima\}$, and targets $\Tb = \{\T_1, \ldots, \T_\dimt\}$. Each attribute $A \in \Ab$ can be nominal, ordinal, or numeric. When $A$ is either nominal or ordinal, its domain $\dom(A)$ is the set of its possible values. Each target $T \in \Tb$ can be either numeric or ordinal. If $T_i \in \Tb$ is numeric, we assume that $\dom(T_i) = [\minv_i, \maxv_i]$. Otherwise, $\dom(T_i)$ is the set of possible values of $T_i$. The probability function of $\Tb$ on $\D$ is denoted as $p(\Tb)$.

A subgroup $\Sub$ on $\D$ has the form $b_1 \wedge \ldots \wedge b_k$ ($k \in [1, \dima]$) where (1) each $b_j$ ($j \in [1, k]$) is a condition imposed on some attribute $A \in \Ab$ and (2) no two conditions share the same attribute. For each numeric attribute $\A$, each of its conditions $b$ has the form $A \in (l, u]$ where $l \in \mathbb{R} \cup \{-\infty\}$, $u \in \mathbb{R} \cup \{+\infty\}$, and $l < u$. If $A$ is ordinal, $b$ also has the form $A \in (l, u]$ where $l, u \in \dom(A)$ and $l < u$. If $A$ is categoric, $b$ instead has the form $A = a$ where $a \in \dom(A)$.

We let $\Subsett$ be the set of all subgroups on $\D$. The subset of $\D$ covered by $\Sub$ is denoted as $\D_{\Sub}$. We write $p_{\Sub}(\Tb)$ as the probability function of $\Tb$ on $\D_\Sub$. Overall, subgroup discovery is concerned with detecting $\Sub$ having high exception in its target distribution. The level of exception can be expressed through the divergence between $p_{\Sub}(\Tb)$ and $p(\Tb)$. To achieve high generality -- besides the divergence score -- the support $s = |\D_\Sub|$ of $\Sub$ should not be too small.

To quantify quality of subgroups, we need quality measure $\phi: \Subsett \rightarrow \mathbb{R}$ which assigns a score to each subgroup; the higher the score the better. Typically, $\phi$ needs to capture both unusualness of target distribution and subgroup support. In this paper, we will study five such quality measures.


\section{Mining Binary Features} \label{sec:one}

\ourmethod mines binary features for attribute $\A$ that is either numeric or ordinal. When the features serve as one-dimensional subgroups on the first level of the search lattice, the entire realizations of $\A$ are used. For one-dimensional refinements, only those realizations covered by the subgroup in consideration are used~\cite{leeuwen:subgroup1,wouter:subgroup3}. For readability, we keep our discussion to the first case. The presentation can straightforwardly be adapted to the second case by switching from the context of the entire data set $\D$ to its subset covered by the subgroup to be refined. Below we also use \textit{bins} and \textit{binary features} interchangeably.

In a nutshell, \ourmethod aims at finding binary features with maximal average quality. More specifically, it searches for the binning $\dsc$ of $\A$ such that the average quality of the bins formed by $\dsc$ is maximal. Formally, let $\Dsc$ be the set of possible binnings on $A$. For each $g \in \Dsc$, we let $\{b_g^1, \ldots, b_g^{|g|}\}$ be the set of bins formed by $g$ where $|g|$ is its number of bins. Each bin $b_g^i = (l_g^i, u_g^i]$ where $l_g^1 = -\infty$, $u_g^{|g|} = +\infty$, and $l_g^i = u_g^{i-1}$ for $i \in [2, |g|]$. \ourmethod solves for
$$\textstyle\dsc = \arg\max\limits_{g \in \Dsc} \frac{1}{|g|} \sum\limits_{i=1}^{|g|} \phi(b_g^i).$$
Another alternative would be to consider the \textit{sum} of subgroup quality. We discuss this option shortly afterward. Now, we present \ourmethod, our solution to the above problem.

At first, we note that $|\Dsc| = O(2^\size)$, i.e.\ the search space is exponential in $\size$ making an exhaustive enumeration infeasible. Fortunately, it is structured. In particular, for each $\lambda \in [1, \size]$ let $\dsc_\lambda$ be the optimal solution over all binnings producing $\lambda$ bins on $\A$. Let $\{b_\dsc^1, \ldots, b_\dsc^{\lambda}\}$ be its bins. We observe that for a fixed value of $\lambda$,
\begin{equation} \label{eq:recursive}
\textstyle\sum\limits_{i=1}^{\lambda} \phi(b_\dsc^i) = \phi(b_\dsc^\lambda) + \sum\limits_{i=1}^{\lambda - 1} \phi(b_\dsc^i)
\end{equation}
must be maximal. On the other hand, as $\dsc_\lambda$ is optimal w.r.t.\ $\lambda$, $\{b_\dsc^1, \ldots, b_\dsc^{\lambda - 1}\}$ must be the optimal way to partition values $\A \leq l_\dsc^{\lambda}$ into $\lambda - 1$ bins. Otherwise, we could have chosen a better way to do so. This consequently would produce another binning for all values of $\A$ such that (1) this binning has $\lambda$ bins and (2) it has a total quality higher than that of $\dsc_\lambda$. The existence of such a binning contradicts our assumption on $\dsc_\lambda$.

Hence, for each $\lambda$ its optimal binning $\dsc_\lambda$ exhibits optimal substructure. This motivates us to build a \textit{dynamic programming} algorithm to solve our problem.

\noindent\textbf{Algorithmic approach.}\ Our \ourmethod solution is in Algorithm~\ref{algo:two}. In short, it first forms bins $\{c_1, \ldots, c_\beta\}$ where $\beta \ll \size$. Each value $\qual[\lambda][i]$ where $\lambda \in [1, \beta]$ and $i \in [\lambda, \beta]$ stands for the total quality of bins obtained by optimally merging (discretizing) initial bins $c_1, \ldots, c_i$ into $\lambda$ bins. $b[\lambda][i]$ contains the resulting bins. Our goal is to efficiently compute $\qual[1\ldots\beta][\beta]$ and $b[1\ldots\beta][\beta]$. To do so, from Lines~4 to~6 we first compute $\qual[1][1\ldots\beta]$ and $b[1][1\ldots\beta]$. Then from Lines~7 to~14, we incrementally compute relevant elements of arrays $\qual$ and $b$, using the recursive relation described in Equation~(\ref{eq:recursive}). This is standard dynamic programming. Finally, we return the optimal binning after normalizing by the number of bins (Lines~15 and~16). There are two important points to note here.

First, we form initial bins $\{c_1, \ldots, c_\beta\}$ of $\A$. Ideally, one would start with $O(\size)$ bins. However, the quality score $\phi(c)$ of bin $c$ is not reliable as well as not meaningful when its support $|c| = O(1)$. Thus, by pre-partitioning $\A$ in to $\beta$ bins, we ensure that there is sufficient data in each bin for a statistically reliable assessment of divergence. Choosing a suitable value for $\beta$ represents a tradeoff between accuracy and efficiency. We empirically study its effect in Section~\ref{sec:exp}. 

Second, to ensure efficiency we need an efficient strategy to pre-compute $\phi(\bigcup_{k=j}^i c_k)$ (used in Lines~5, 9, and~10) for all $1 \leq j \leq i \leq \beta$. In the next section, we explain how to do this for different quality measures and analyze the complexity of \ourmethod accordingly.



\begin{algorithm}[t]
\caption{\textsc{\ourmethod}}
\label{algo:two}
\begin{algorithmic}[1]
\STATE Create initial disjoint bins $\{c_1, \ldots, c_\beta\}$ of $\A$

\STATE Create a double array $\qual[1\ldots\beta][1\ldots\beta]$

\STATE Create an array $b[1\ldots\beta][1\ldots\beta]$ to store bins

\FOR{$i = 1 \rightarrow \beta$}
	\STATE $b[1][i] = \bigcup_{k=1}^{i} c_k$ and $\qual[1][i] = \phi(b[1][i])$
\ENDFOR

\FOR{$\lambda = 2 \rightarrow \beta$}
	\FOR{$i = \lambda \rightarrow \beta$}
		\STATE $\pos = \arg\max\limits_{1 \leq j \leq i-1} \qual[\lambda-1][j] + \phi(\bigcup_{k=j+1}^{i} c_k)$
		
		\STATE $\qual[\lambda][i] = \qual[\lambda-1][\pos] + \phi(\bigcup_{k=\pos+1}^{i} c_k)$
		
		\STATE Copy all bins in $b[\lambda-1][\pos]$ to $b[\lambda][i]$
		
		\STATE Add $\bigcup_{k=\pos+1}^{i} c_k$ to $b[\lambda][i]$
	\ENDFOR
\ENDFOR

\STATE $\lambda^{*} = \arg\max\limits_{1 \leq \lambda \leq \beta} \frac{1}{\lambda} \qual[\lambda][\beta]$

\STATE Return $b[\lambda^{*}][\beta]$
\end{algorithmic}
\end{algorithm}

\noindent\textbf{Alternative setting.}\ An intuitive alternate formulation of the problem is to maximize the \textit{total} quality of 1-D subgroups formed on $\A$. Formally, we have
$\dsc = \arg\max\limits_{g \in \Dsc} \sum\limits_{i=1}^{|g|} \phi(b_g^i)$, 
which can also be solved by dynamic programming (see Appendix~\ref{sec:alternative} for details). We compare to this setting in the experiments. We find that our standard setting, maximizing the average score, leads to much better results.


\section{Quality Measures} \label{sec:div}

\ourmethod works with any quality measure. In this section we show how to achieve efficiency, i.e.\ efficiently pre-compute $\phi(\bigcup_{k=j}^i c_k)$ for all $1 \leq j \leq i \leq \beta$, with various measures handling different types of targets. More specifically, we look at five measures: $\WRAcc$~\cite{henrik:subgroup2,henrik:subgroup3,leeuwen:subgroup1}, $\zscore$~\cite{mampaey:subgroup}, a measure based on Kullback-Leibler divergence ($\kl$)~\cite{leeuwen:subgroup1,leeuwen:subgroup2}, a measure based on Hellinger distance ($\hd$)~\cite{leman:emm}, and a measure based on quadratic measure of divergence ($\qr$)~\cite{nguyen:ipd}. We show characteristics of all measures in Table~\ref{tab:measures} and provide their details below. To simplify our analysis, we assume that each bin $c_i$ ($i \in [1, \beta]$) contains $\frac{\size}{\beta}$ objects.

\begin{table*}[t]
\centering 
\begin{tabular}{lccccccc}
\toprule

& \multicolumn{3}{l}{\textbf{Univariate}} &  & \multicolumn{3}{l}{\textbf{Multivariate}}\\

\cmidrule{2-4} \cmidrule{6-8}

{\bf Measure} & {\bf Nominal} & {\bf Ordinal} & {\bf Numeric} &  & {\bf Nominal} & {\bf Ordinal} & {\bf Numeric}\\[0.5em]

\otoprule

$\WRAcc$ & \cmark & \cmark & \xmark &  & \xmark & \xmark & \xmark\\

$\zscore$ & \xmark & \xmark & \cmark &  & \xmark & \xmark & \xmark\\

$\kl$ & \cmark & \cmark & \xmark &  & \cmark & \cmark & \xmark\\

$\hd$ & \cmark & \cmark & \xmark &  & \cmark & \cmark & \xmark\\

$\qr$ & \xmark & \cmark & \cmark &  & \xmark & \cmark & \cmark\\

\bottomrule
\end{tabular}
\caption{Characteristics of quality measures considered in this paper.} \label{tab:measures} 
\end{table*}

\subsection{$\WRAcc$ measure}

In subgroup discovery this measure is suited when $\D$ has a single binary target $\T$. That is, $\T$ assumes either a positive or a negative nominal value. Let $\size_{+}$ be the number of objects in $\D$ having positive target, i.e.\ positive label. Consider a subgroup $\Sub$ having $s = |\D_\Sub|$ objects; $s_{+}$ of which have positive label. The $\WRAcc$ score of $\Sub$ is defined as
$$\textstyle\WRAcc(\Sub) = \frac{s}{\size} (\frac{s_{+}}{s} - \frac{\size_{+}}{\size}).$$ 
Algorithm~\ref{algo:wracc} shows how to pre-compute $\WRAcc(\bigcup_{k=j}^i c_k)$ for all $1 \leq j \leq i \leq \beta$. The first for loop (Lines~2 to~5) is to count the number of positively labeled objects of $c_i$ ($i \in [1, \beta]$) and hence compute its $\WRAcc$ score. This step takes $O(\size)$. The nested loop (Lines~6 to~12) is to incrementally count the number of positively labeled objects of $\bigcup_{k=j}^i c_k$ and hence compute its $\WRAcc$ score. This step takes $O(\beta^2)$. Thus, Algorithm~\ref{algo:wracc} takes $O(\size + \beta^2)$.

Hence, \ourmethod with $\WRAcc$ measure (\wflexi) takes $O(\size + \beta^2 + \beta^3) = O(\size + \beta^3)$.

\begin{algorithm}[t]
\caption{\textsc{Pre-computation with $\WRAcc$}}
\label{algo:wracc}
\begin{algorithmic}[1]

\STATE Create an integer array $\countPos[1\ldots\beta]$


\FOR{$i = 1 \rightarrow \beta$}
	\STATE $\countPos[i] =$ \# of objects in $\D_{c_i}$ with positive label
	
	\STATE Compute $\WRAcc(c_i)$ based on $\countPos[i]$
\ENDFOR

\FOR{$i = 2 \rightarrow \beta$}
	\STATE $\theta = \countPos[i]$
	
	\FOR{$j = i - 1 \rightarrow 1$}
		\STATE $\theta = \theta + \countPos[j]$
		
		\STATE Set \# of objects with positive label in $\bigcup_{k=j}^i c_k$ to $\theta$ and hence compute $\WRAcc(\bigcup_{k=j}^i c_k)$
		
	\ENDFOR
\ENDFOR
\end{algorithmic}
\end{algorithm}

\subsection{$\zscore$ measure}

This measure is suited when $\D$ has a single numeric target $\T$. Let $\mu_0$ and $\sigma_0$ be the mean and standard deviation of $\T$ in $\D$. Consider a subgroup $\Sub$ and let $\mu$ and $\sigma$ be the mean and standard deviation of $\T$ in $\Sub$. The quality of $\Sub$ w.r.t.\ $\zscore$ is defined as
$$\textstyle\zscore(\Sub) = \frac{\sqrt{s}}{\sigma_0} (\mu - \mu_0)$$
where $s = |\D_\Sub|$. To pre-compute $\zscore(\bigcup_{k=j}^i c_k)$ for all $1 \leq j \leq i \leq \beta$, we can re-use Algorithm~\ref{algo:wracc} with a few modifications. The new algorithm is in Algorithm~\ref{algo:zscore}. It also takes $O(\size + \beta^2)$.

Hence, \ourmethod with $\zscore$ measure (\zflexi) has the same complexity as \wflexi.

\begin{algorithm}[t]
\caption{\textsc{Pre-computation with $\zscore$}}
\label{algo:zscore}
\begin{algorithmic}[1]

\STATE Create an integer array $\binMean[1\ldots\beta]$


\FOR{$i = 1 \rightarrow \beta$}
	\STATE $\binMean[i] =$ target mean in $c_i$
	
	\STATE Compute $\zscore(c_i)$ based on $\binMean[i]$
\ENDFOR

\FOR{$i = 2 \rightarrow \beta$}
	\STATE $\theta = |c_i|$
	
	\STATE $\mu = \binMean[i]$
	
	\FOR{$j = i - 1 \rightarrow 1$}
		\STATE $\mu = \theta \times \mu + |c_j| \times \binMean[j]$
		
		\STATE $\theta = \theta + |c_j|$
		
		\STATE Set target mean in $\bigcup_{k=j}^i c_k$ to $\mu$ and hence compute $\zscore(\bigcup_{k=j}^i c_k)$
	\ENDFOR
\ENDFOR
\end{algorithmic}
\end{algorithm}

\subsection{$\kl$ measure} \label{sec:klmeasure}

This measure is suited to $\D$ with univariate/multivariate nominal and/or ordinal target. W.l.o.g., assume that we have multivariate target $\Tb = \{\T_1, \ldots, \T_\dimt\}$. The $\kl$ score of each subgroup $\Sub$ is defined as
$$\textstyle\kl(\Sub) = \frac{s}{\size} \sum\limits_{t_1, \ldots, t_\dimt} p_\Sub(t_1, \ldots, t_\dimt) \times \log \frac{p_\Sub(t_1, \ldots, t_\dimt)}{p(t_1, \ldots, t_\dimt)}$$
where $s = |\D_\Sub|$. A straightforward computation of $\kl(\bigcup_{k=j}^i c_k)$ for every $1 \leq j \leq i \leq \beta$ is done by considering only $(t_1, \ldots, t_\dimt)$ that appears in the data covered by $\Sub$. This is because $p_\Sub(t_1, \ldots, t_\dimt) \times \log \frac{p_\Sub(t_1, \ldots, t_\dimt)}{p(t_1, \ldots, t_\dimt)} = 0$ for $(t_1, \ldots, t_\dimt)$ not in $\Sub$. As $p_\Sub(t_1, \ldots, t_\dimt)$ and $p(t_1, \ldots, t_\dimt)$ can be efficiently calculated using hash tables, computing $\kl(\bigcup_{k=j}^i c_k)$ takes $O((i - j + 1) \times \dimt \times \frac{\size}{\beta})$. The pre-computation hence in total takes
$$\textstyle\sum\limits_{i=1}^\beta \sum\limits_{j=1}^{i} O((i - j + 1) \times \dimt \times \frac{\size}{\beta}),$$
which can be simplified to $O(\size \beta^2 \dimt)$.

Thus, \ourmethod with $\kl$ (\kflexi) takes $O(\size \beta^2 \dimt + \beta^3) = O(\size \beta^2 \dimt)$ as $\beta \ll \size$.

\subsection{$\hd$ measure}

Similarly to $\kl$ measure, $\hd$ measure is suited to $\D$ with univariate/multivariate nominal and/or ordinal target. The $\hd$ score of a subgroup $\Sub$ is defined as
\begin{align*}
& \textstyle\hd(\Sub) = \left(- \frac{s}{\size} \log \frac{s}{\size} - \frac{\size - s}{\size} \log \frac{\size - s}{\size}\right) \\
& \textstyle\quad \quad \times \sum\limits_{t_1, \ldots, t_\dimt} \left(\sqrt{p_\Sub(t_1, \ldots, t_\dimt)} - \sqrt{p(t_1, \ldots, t_\dimt)}\right)^2.
\end{align*}
where $s = |\D_\Sub|$. The pre-computation is done similarly to Section~\ref{sec:klmeasure}. However, we here need to consider $(t_1, \ldots, t_\dimt)$ that appears in $\D$, not just in $\Sub$. Thus, for $(i, j)$ where $j \leq i$, computing $\hd(\bigcup_{k=j}^i c_k)$ takes $O(\size \dimt)$. Hence, the cost of the pre-computation is identical to that of \kflexi.

In other words, \ourmethod with $\hd$ measure (\hflexi) has the same complexity as \kflexi.

\subsection{$\qr$ measure}

To handle univariate/multivariate numeric and/or ordinal targets, we propose $\qr$ measure which is based on $\id$ -- a quadratic measure of divergence~\cite{nguyen:ipd}. We pick $\id$ as it is applicable to both univariate and multivariate data. In addition, its computation on empirical data is in closed form formula, i.e.\ it is highly suited to exploratory data analysis. Originally, $\id$ is used for numeric data. Our $\qr$ measure improves over this by adapting $\id$ to ordinal data. This enables $\qr$ to handle multivariate numeric targets, as well as multivariate targets whose types are a mixed of numeric and ordinal. As shown in Table~\ref{tab:measures}, no previous measure is able to achieve this. By making $\qr$ work with \ourmethod, we can further demonstrate the flexibility and generality of our solution. The details are as follows.

Consider a subgroup $\Sub$ with $s = |\D_\Sub|$ objects. W.l.o.g., assume that there are multiple targets. The $\qr$ score of $\Sub$ is
$$\textstyle\qr(\Sub) = f(s) \times \id(p_\Sub(\Tb)\; ||\; p(\Tb))$$
where $f(s)$ is either $\frac{s}{\size}$ (following~\cite{henrik:subgroup2,leeuwen:subgroup2}) or $\left(\frac{s}{\size} \log \frac{s}{\size} - \frac{\size - s}{\size} \log \frac{\size - s}{\size}\right)$ (following~\cite{leman:emm,wouter:subgroup1}). When all targets are numeric, we have $\id(p_\Sub(\Tb)\; ||\; p(\Tb)) = $
$$\textstyle\int_{\minv_1}^{\maxv_1} \cdots \int_{\minv_\dimt}^{\maxv_\dimt} \left(P_\Sub(t_1, \ldots, t_\dimt) - P(t_1, \ldots, t_\dimt)\right)^2 dt_1 \cdots dt_\dimt$$
where $P_\Sub(.)$ and $P(.)$ are the cdfs of $p_\Sub(.)$ and $p(.)$, respectively. We extend to ordinal targets by replacing $\int_{\minv_i}^{\maxv_i} dt_i$ with $\sum\limits_{t_i \in \dom(T_i)}$ for each ordinal $T_i$.

Similarly to $\id$, our $\qr$ measure also permits computation on empirical data in closed form. More specifically, let the empirical data of $\D$ be $\{\D^1, \ldots, \D^\size\}$. Similarly, let the empirical data of $\D_\Sub$ be $\{\D_\Sub^1, \ldots, \D_\Sub^s\}$ where $s = |\D_\Sub|$. We write $\D^1_i$ and $\D_{\Sub,i}^1$ as the projections of $\D^1$ and respectively $\D_\Sub^1$ on $\T_i$. We have the following.

\begin{theorem} \label{theo:quadem}
Empirically, $\qr(p_\Sub(\Tb)\; ||\; p(\Tb)) =$
\begin{align*}
&\textstyle f(s) \times \left(\frac{1}{s^2} \sum\limits_{i=1}^{s}\sum\limits_{j=1}^{s} \prod\limits_{k=1}^{\dimt} h_k(\D_{\Sub,i}^k, \D_{\Sub,j}^k) \right. \\
&\left. \textstyle\qquad \qquad - \frac{2}{s \size} \sum\limits_{i=1}^{s}\sum\limits_{j=1}^{\size} \prod\limits_{k=1}^{\dimt} h_k(\D_{\Sub,i}^k, \D_j^k) \right. \\ 
&\left. \textstyle\qquad \qquad + \frac{1}{\size^2} \sum\limits_{i=1}^{\size}\sum\limits_{j=1}^{\size} \prod\limits_{k=1}^{\dimt} h_k(\D_i^k, \D_j^k)\right)
\end{align*}
where $h_k\left(x, y\right) = \left(\maxv_k - \max(x, y)\right)$ if $T_k$ is numeric, and $h_k\left(x, y\right) = \sum\limits_{t \in \dom(T_k)} \textbf{I}\left(t \geq \max(x, y)\right)$ if $T_k$ is ordinal. Here, $\textbf{I}(.)$ is an indicator function.
\end{theorem}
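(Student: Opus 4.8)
The plan is to expand the squared difference of cumulative distribution functions inside $\id$ and then integrate (respectively sum, for ordinal dimensions) term by term, exploiting the product structure of the empirical joint cdfs. First I would write each empirical joint cdf as an average of products of one-dimensional step indicators,
$$P_\Sub(t_1,\ldots,t_\dimt) = \frac{1}{s}\sum_{i=1}^{s}\prod_{k=1}^{\dimt}\textbf{I}(\D_{\Sub,i}^k \leq t_k),$$
and analogously $P(t_1,\ldots,t_\dimt) = \frac{1}{\size}\sum_{j=1}^{\size}\prod_{k=1}^{\dimt}\textbf{I}(\D_{j}^k \leq t_k)$. Expanding $(P_\Sub - P)^2 = P_\Sub^2 - 2\,P_\Sub P + P^2$ then splits the integrand into three double sums over data indices, each an average of products of paired indicators.

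The key algebraic step is the identity $\textbf{I}(a \leq t_k)\,\textbf{I}(b \leq t_k) = \textbf{I}(\max(a,b) \leq t_k)$, which collapses each pair of indicators in a given dimension into a single indicator governed by the larger of the two data values. Because the resulting integrand is a product over $k$ of functions each depending on a single variable $t_k$, Fubini's theorem lets the multivariate integral factor into a product of one-dimensional integrals, one per dimension. For a numeric target $T_k$ the one-dimensional integral evaluates as $\int_{\minv_k}^{\maxv_k}\textbf{I}(\max(a,b)\leq t_k)\,dt_k = \maxv_k - \max(a,b)$, using that every data value lies in $[\minv_k,\maxv_k]$; this is exactly $h_k(a,b)$ in the numeric case. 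For an ordinal target the integral is replaced by the sum $\sum_{t\in\dom(T_k)}\textbf{I}(t \geq \max(a,b))$, which is the ordinal $h_k$. The mixed numeric/ordinal case follows immediately, since the factorization treats each dimension independently.

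Carrying this out for the three terms of the expansion yields $\tfrac{1}{s^2}\sum_{i,j}\prod_k h_k(\D_{\Sub,i}^k,\D_{\Sub,j}^k)$, $-\tfrac{2}{s\size}\sum_{i,j}\prod_k h_k(\D_{\Sub,i}^k,\D_{j}^k)$, and $\tfrac{1}{\size^2}\sum_{i,j}\prod_k h_k(\D_{i}^k,\D_{j}^k)$; multiplying by $f(s)$ gives the claimed closed form. I do not expect a serious obstacle here, as the interchanges are all over finite sums (linearity) and a separable product integrand (Fubini). The one point requiring care is the factorization of the multivariate integral across dimensions: it is valid precisely because the collapsed indicator integrand is separable, and one must confirm the boundary evaluation $\maxv_k - \max(a,b)$ holds for all pairs, i.e. that the empirical values never fall outside the assumed domain $[\minv_k,\maxv_k]$.
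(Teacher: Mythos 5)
Your proposal is correct and follows essentially the same route as the paper's own proof: writing the empirical joint cdfs as averages of products of indicators, expanding $(P_\Sub - P)^2$ into three double sums, collapsing paired indicators via $\textbf{I}(a \leq t_k)\textbf{I}(b \leq t_k) = \textbf{I}(\max(a,b) \leq t_k)$, and factoring the multivariate integral/sum dimension-by-dimension to obtain the $h_k$ terms. Your added care about the boundary evaluation is warranted but unproblematic, since the paper assumes $\dom(T_k) = [\minv_k, \maxv_k]$ for numeric targets.
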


\proofApx

Following Theorem~\ref{theo:quadem}, to obtain $\qr(p_\Sub(\Tb)\; ||\; p(\Tb))$ we need to compute three terms -- referred to as $\Sub.e_1$, $\Sub.e_2$, and $\Sub.e_3$ -- where
$$\textstyle\qr(p_\Sub(\Tb)\; ||\; p(\Tb)) = f(s) \times \left(\frac{1}{s^2} \Sub.e_1 - \frac{2}{s \size} \Sub.e_2 + \frac{1}{\size^2} \Sub.e_3\right).$$
Note that $e = \Sub.e_3$ is independent of $\Sub$ and thus needs to be computed only once for all subgroups. We now prove a property of $\qr$ which is important for efficiently pre-computing $\qr(\bigcup_{k=j}^i c_k)$ for all $1 \leq j \leq i \leq \beta$.

\begin{lemma} \label{lem:quadcom}
Let $\Sub$ and $\Sube$ be two consecutive non-overlapping bins of attribute $\A$, i.e.\ $\D_\Sub \cap \D_\Sube = \emptyset$. Let $Y = \Sub \cup \Sube$, $s = |\D_\Sub|$, and $r = |\D_\Sube|$. It holds that $Y.e_1 = \Sub.e_1 + \Sube.e_1 + 2 \inter(\Sub, \Sube)$ and $Y.e_2 = \Sub.e_2 + \Sube.e_2$ where $\inter(\Sub, \Sube) = \sum\limits_{i=1}^s \sum\limits_{j=1}^r \prod\limits_{k=1}^\dimt h_k(\D_{\Sub,i}^k, \D_{\Sube,j}^k)$.
\end{lemma}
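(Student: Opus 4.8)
The plan is to prove both identities by expanding the defining double sums from Theorem~\ref{theo:quadem} and exploiting that $\D_\Sub$ and $\D_\Sube$ are disjoint, so that $\D_Y$ is their disjoint union of $s+r$ objects whose index set partitions cleanly into an ``$\Sub$ part'' (the objects $\D_{\Sub,1}, \ldots, \D_{\Sub,s}$) and an ``$\Sube$ part'' (the objects $\D_{\Sube,1}, \ldots, \D_{\Sube,r}$).

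First I would dispatch $Y.e_2$, the easy case. By definition $Y.e_2 = \sum_{i=1}^{s+r} \sum_{j=1}^\size \prod_{k=1}^\dimt h_k(\D_{Y,i}^k, \D_j^k)$, where the outer index runs over the objects of $Y$ and the inner index over all of $\D$. Since the outer index set is the disjoint union of the objects of $\Sub$ and those of $\Sube$, while the inner sum over all of $\D$ is unaffected by this split, I can simply break the outer sum into its two parts. The part over the $\Sub$ objects is exactly $\Sub.e_2$ and the part over the $\Sube$ objects is exactly $\Sube.e_2$, giving $Y.e_2 = \Sub.e_2 + \Sube.e_2$ with no further work.

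Next, for $Y.e_1 = \sum_{i=1}^{s+r}\sum_{j=1}^{s+r} \prod_{k=1}^\dimt h_k(\D_{Y,i}^k, \D_{Y,j}^k)$ both indices range over $\D_Y$, so partitioning each index into the $\Sub$ part and the $\Sube$ part splits the double sum into four blocks, indexed by $(\Sub,\Sub)$, $(\Sube,\Sube)$, $(\Sub,\Sube)$, and $(\Sube,\Sub)$. The first two blocks are by definition $\Sub.e_1$ and $\Sube.e_1$, and the $(\Sub,\Sube)$ block is precisely $\inter(\Sub,\Sube)$. It then remains to show that the $(\Sube,\Sub)$ block also equals $\inter(\Sub,\Sube)$, which is what produces the factor of $2$.

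This last point is the crux of the argument, and it rests on the symmetry of each $h_k$. I would verify that $h_k(x,y) = h_k(y,x)$ directly from the two cases in Theorem~\ref{theo:quadem}: both $\maxv_k - \max(x,y)$ (numeric) and $\sum_{t \in \dom(T_k)} \textbf{I}(t \geq \max(x,y))$ (ordinal) depend on $x$ and $y$ only through $\max(x,y)$, which is symmetric. Hence $\prod_{k=1}^\dimt h_k(\D_{\Sube,i}^k, \D_{\Sub,j}^k) = \prod_{k=1}^\dimt h_k(\D_{\Sub,j}^k, \D_{\Sube,i}^k)$ term by term, and after relabeling the two summation indices the $(\Sube,\Sub)$ block coincides with $\inter(\Sub,\Sube)$. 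Combining the four blocks yields $Y.e_1 = \Sub.e_1 + \Sube.e_1 + 2\inter(\Sub,\Sube)$. The only subtlety to state carefully is that the hypothesis $\D_\Sub \cap \D_\Sube = \emptyset$ is exactly what guarantees the index partition involves no double counting, while the symmetry of $h_k$ is what collapses the two cross blocks into a single $2\inter(\Sub,\Sube)$; no analogous symmetry is needed for $e_2$, since there additivity over the disjoint outer index set suffices.
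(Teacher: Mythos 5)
Your proof is correct and takes essentially the same route as the paper's: expand the empirical expression for $Y = \Sub \cup \Sube$, split the double sums over the disjoint index sets of $\Sub$ and $\Sube$, and identify the resulting blocks with $\Sub.e_1$, $\Sube.e_1$, the two cross blocks giving $2\,\inter(\Sub, \Sube)$, and $\Sub.e_2 + \Sube.e_2$. If anything, you are more explicit than the paper, which simply asserts the block decomposition, whereas you justify the factor of $2$ by noting that each $h_k$ depends on its arguments only through $\max(x,y)$ and is therefore symmetric.
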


\proofApx

Lemma~\ref{lem:quadcom} tells us that terms $e_1$ and $e_2$ of a bin made up by joining two adjacent non-overlapping bins $S$ and $R$ can be obtained from the terms of $S$ and $R$, and $\inter(\Sub, \Sube)$. Note that $\inter$ is symmetric. Further, we prove that it is additive -- a property that is also important for the pre-computation.

\begin{lemma} \label{lem:inter}
Let $\Sube_1, \ldots, \Sube_l$, and $\Sub$ be non-overlapping bins of $\A$ such that $\Sube_i$ is adjacent to $\Sube_{i+1}$ for $i \in [1, l-1]$, and $\Sube_l$ is adjacent to $\Sub$. It holds that
$$\textstyle\inter\left(\Sub, \bigcup_{i=1}^l \Sube_i\right) = \sum\limits_{i=1}^l \inter(\Sub, \Sube_i).$$
\end{lemma}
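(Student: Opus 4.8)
The plan is to prove the identity by expanding $\inter$ directly from its definition in Lemma~\ref{lem:quadcom} and using that $\inter(\Sub, \cdot)$ is additive over disjoint collections of objects in its second argument. Recall that $\inter(\Sub, \Sube)$ is a double sum over the objects of $\D_\Sub$ and of $\D_\Sube$ of the product $\prod_{k=1}^{\dimt} h_k(\cdot, \cdot)$, and that this product depends only on the two individual objects being compared, never on how the objects of the second argument are grouped into bins. Hence partitioning the objects of the second argument induces a corresponding partition of the double sum.

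First I would observe that, since $\Sube_1, \ldots, \Sube_l$ are pairwise non-overlapping, the objects they cover decompose as a disjoint union, so that the restriction of $\D$ to $\bigcup_{i=1}^l \Sube_i$ is $\bigsqcup_{i=1}^l \D_{\Sube_i}$. Writing $U = \bigcup_{i=1}^l \Sube_i$ and $\rho = |\D_U|$, this means the objects of $\D_U$ can be indexed so that the range $b \in [1, \rho]$ splits into $l$ consecutive blocks, the $i$-th block enumerating exactly the objects of $\D_{\Sube_i}$.

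Next I would write out the left-hand side from the definition,
$$\inter(\Sub, U) = \sum_{a=1}^{s} \sum_{b=1}^{\rho} \prod_{k=1}^{\dimt} h_k\bigl(\D_{\Sub,a}^k, \D_{U,b}^k\bigr),$$
with $s = |\D_\Sub|$, then split the inner sum over $b$ according to the $l$ blocks above and interchange the two finite summations to move the sum over blocks outermost. Each block contributes precisely $\sum_{a=1}^{s}\sum_{j} \prod_{k=1}^{\dimt} h_k(\D_{\Sub,a}^k, \D_{\Sube_i, j}^k) = \inter(\Sub, \Sube_i)$, giving $\sum_{i=1}^l \inter(\Sub, \Sube_i)$ as claimed.

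The argument has no real obstacle; the only thing needing care is the index bookkeeping for the disjoint union, together with the remark that the adjacency hypothesis is not actually used in the identity -- only pairwise disjointness of the covered object sets matters. Adjacency is relevant in the surrounding development because it guarantees that $U$ is again a contiguous bin, but it plays no role in the summation identity itself.
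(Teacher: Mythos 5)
Your proof is correct and follows essentially the same route as the paper's: expand $\inter\left(\Sub, \bigcup_{i=1}^l \Sube_i\right)$ from its definition as a double sum, split the inner sum over the objects of the union into the $l$ disjoint blocks corresponding to the $\Sube_i$, and recognize each block as $\inter(\Sub, \Sube_i)$. Your added remark that only pairwise disjointness (not adjacency) is needed for the identity is accurate and a nice clarification, though the paper leaves it implicit.
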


\proofApx

Algorithm~\ref{algo:qr} summarizes how to compute $\qr(\bigcup_{k=j}^i c_k)$ for all $1 \leq j \leq i \leq \beta$. The details are as follows.

\begin{itemize}
\item First, we compute terms $e_1$ and $e_2$, and $\qr(c_i)$ for each $i \in [1, \beta]$ (Line~1): This step takes $O(\size \times \frac{\size}{\beta} \times \dimt)$ for each $c_i$, i.e.\ its total cost is $O(\size^2 \dimt)$.

\item Second, we compute $\inter(c_j, c_i)$ for each $j \in [1, \beta-1]$ and $i \in [j+1, \beta]$ (Line~2): This step takes $O(\frac{\size^2}{\beta^2} \dimt)$ for each pair $(j, i)$, i.e.\ its total cost is $O(\size^2 \dimt)$.

\item Third, we compute $\inter(\bigcup_{k=j}^{i-1} c_k, c_i)$ for each $i \in [2, \beta]$ and $j \in [1, i-1]$ (Lines~3 to~9): We use the fact that $\inter(\bigcup_{k=j}^{i-1} c_k, c_i) = \sum_{k=j}^{i-1} \inter(c_k, c_i)$ (see Lemma~\ref{lem:inter}). This step takes $O(\beta^2)$.

\item Fourth, we compute terms $e_1$ and $e_2$, and $\qr(\bigcup_{k=j}^i c_k)$ for each $i \in [2, \beta]$ and $j \in [1, i-1]$ (Lines~10 to~14): From Lemma~\ref{lem:quadcom}, terms $e_1$ and $e_2$ of $\bigcup_{k=j}^i c_k$ can be computed based on the terms of $\bigcup_{k=j}^{i-1} c_k$, $c_i$, and $\inter(\bigcup_{k=j}^{i-1} c_k, c_i)$. This step takes $O(\beta^2)$.
\end{itemize}

Overall, Algorithm~\ref{algo:qr} takes $O(\size^2 \dimt)$. Thus, \ourmethod with $\qr$ (\qflexi) takes $O(\size^2 \dimt + \beta^3) = O(\size^2 \dimt)$ as $\beta \ll \size$.

\begin{algorithm}[t]
\caption{\textsc{Pre-computation with $\qr$}}
\label{algo:qr}
\begin{algorithmic}[1]

\STATE Compute terms $e_1$ and $e_2$, and $\qr(c_i)$ for $c_i$ ($i \in [1, \beta]$)

\STATE Compute $\inter(c_j, c_i)$ for every $j \in [1, \beta - 1]$ and $i \in [j+1, \beta]$

\FOR{$i = 2 \rightarrow \beta$}
	\STATE $\theta = 0$

	\FOR{$j = i - 1 \rightarrow 1$}
		\STATE $\theta = \theta + \inter(c_j, c_i)$
	
		\STATE Set $\inter(\bigcup_{k=j}^{i-1} c_k, c_i)$ to $\theta$
	\ENDFOR
\ENDFOR

\FOR{$i = 2 \rightarrow \beta$}
	\FOR{$j = 1 \rightarrow i - 1$}
		\STATE Compute terms $e_1$ and $e_2$, and $\qr(\bigcup_{k=j}^i c_k)$ for $\bigcup_{k=j}^i c_k$ using the terms of $\bigcup_{k=j}^{i-1} c_k$, $c_i$, and $\inter(\bigcup_{k=j}^{i-1} c_k, c_i)$
	\ENDFOR
\ENDFOR

\end{algorithmic}
\end{algorithm}

\subsection{Remarks}

As $\beta$ is typically small (from 5 to 40), \wflexi, \zflexi, \kflexi, and \hflexi scale linearly in $\size$. On the other hand, \qflexi scales quadratic in $\size$ regardless which value $\beta$ takes. In Section~\ref{sec:scale}, we propose a method to boost the efficiency of \qflexi.


\section{Improving Scalability} \label{sec:scale}

The complexity of \qflexi is quadratic in $\size$, which may become a disadvantage on large data. We thus propose a solution to alleviate the issue. Again, we keep our discussion to the case of one-dimensional subgroups. The case of refinements straightforwardly follows.

We observe that the performance bottleneck is the pre-computations of $\qr(c_i)$ ($i \in [1, \beta]$) and $\inter(c_j, c_i)$ ($j \in [1, \beta-1]$ and $i \in [j+1, \beta]$). In fact, keys to these quantities are the distributions $p_{c_i}(\Tb)$ in bins $c_i$ ($i \in [1, \beta])$. In our computation the data of $\D_{c_i}$ projected onto $\Tb$, denoted as $\D_{c_i,\Tb}$, is considered to be i.i.d.\ samples of the (unknown) pdf $p_{c_i}(\Tb)$. By definition, i.i.d.\ samples are obtained by randomly sampling from an infinite population or by randomly sampling with replacement from a finite population~\cite{thompson:sampling}. In both cases, the distribution of i.i.d.\ samples are assumed to be identical to the distribution of the population. This is especially true when the sample size is very large~\cite{scott:density}. Thus, when $\size$ is very large the size of $\D_{c_i,\Tb}$ -- which is $\frac{\size}{\beta}$ -- is also large. This makes the empirical distribution $\hat{p}_{c_i}(\Tb)$ formed by $\D_{c_i,\Tb}$ approach the true distribution $p_{c_i}(\Tb)$.

Assume now that we randomly draw with replacement $\epsilon \times \frac{\size}{\beta}$ samples $\da_{c_i,\Tb}$ from $\D_{c_i,\Tb}$ where $\epsilon \in (0, 1)$. As mentioned above, $\da_{c_i,\Tb}$ contains i.i.d.\ samples of $\hat{p}_{c_i}(\Tb) \approx p_{c_i}(\Tb)$. As with any set of i.i.d.\ samples with a reasonable size, we can assume that the distribution of $\Tb$ in $\da_{c_i,\Tb}$ is identical to $p_{c_i}(\Tb)$.

Based on this line of reasoning, when $\size$ is large we propose to randomly subsample with replacement the data in each bin $c_i$ ($i \in [1, \beta]$) for our computation. The important point here is to identify how large $\epsilon$ should be, i.e.\ how many samples we should use. We will show that a low value of $\epsilon$ already suffices, e.g.\ $\epsilon = 0.1$. If we subsample the bins while not subsampling $\D$ (in the same way) for computing quality scores, the complexity of \qflexi is $O(\epsilon \size^2 \dimt)$. If we subsample $\D$ as well, its complexity is then $O(\epsilon^2 \size^2 \dimt)$.

\section{Related Work} \label{sec:rl}

Traditionally, subgroup discovery focuses on nominal attributes~\cite{kloesgen:subgroup,lavrac:subgroup,henrik:subgroup1,henrik:subgroup3}. More recent work~\cite{leeuwen:subgroup1,wouter:subgroup1,wouter:subgroup3,lemmerich:subgroup} considers numeric attributes, employing equal-width or equal-frequency binning to create binary features. These strategies however do not optimize quality of the features generated, which consequently affects the final output quality.

To alleviate this, Grosskreutz and R\"uping~\cite{henrik:subgroup2} employ \smdl~\cite{fayyad:discr}. It requires that the target is univariate and nominal. Further, it finds the bins optimizing the divergence between $p_b(\Tb)$ and $p_{b'}(\Tb)$ where $b$ and $b'$ are two arbitrary consecutive bins. That is, only local distributions of the target (within individual bins) are compared to each other. The goal of subgroup discovery in turn is to assess the divergence between $p_b(\Tb)$ and $p(\Tb)$~\cite{henrik:subgroup1}. While \smdl improves over na\"ive binning methods, it does not directly optimize subgroup quality.

Mampaey et al.~\cite{mampaey:subgroup} introduce \RocInt, which searches for the binary feature with highest quality on each numeric/ordinal attribute. It does so by analyzing the coverage space, reminiscent of ROC spaces, of the \textit{univariate} target. \RocInt and \ourmethod are different in many aspects. First, \RocInt is suitable for univariate targets only. \ourmethod in turn works with both univariate and multivariate targets. Second, as \RocInt finds the best feature per attribute, it is not for mining one-dimensional subgroups. In fact, it is designed for mining one-dimensional refinements. On the contrary, \ourmethod can find both types of pattern. Third, \RocInt requires $\phi$ to be convex. \ourmethod in turn works with any type of quality measures.

Besides the binning methods discussed above, there exist also other techniques applicable to -- albeit not yet studied in -- subgroup discovery. For instance, \unml~\cite{petri:mdl} mines bins per numeric attribute that best approximate its true distribution. On the other hand, multivariate binning techniques (e.g.\ \ipd~\cite{nguyen:ipd}) focus on optimizing the divergence between local distributions in individual bins. Overall, these methods do not optimize subgroup quality.

Regarding quality measure $\phi$, majority of existing ones focus on univariate targets~\cite{kloesgen:subgroup,lavrac:subgroup,henrik:subgroup1,henrik:subgroup2,henrik:subgroup3,leeuwen:subgroup3,mampaey:subgroup,wouter:subgroup3,lemmerich:subgroup}. Van Leeuwen and Knobbe~\cite{leeuwen:subgroup1,leeuwen:subgroup2} propose a measure based on Kullback-Leibler divergence for multivariate nominal/ordinal targets. Their measure is reminiscent of $\kl$ measure in Section~\ref{sec:klmeasure}; yet, they assume the targets are statistically independent while $\kl$ takes into account interaction of targets. Also for multivariate nominal/ordinal targets, Duivesteijn et al.~\cite{wouter:subgroup1} introduce a measure based on Bayesian network. Measures for multivariate numeric targets appear mainly in exceptional model mining (EMM)~\cite{leman:emm,wouter:subgroup4}. Consequently, such measures are model-based. Our $\qr$ measure in turn is purely non-parametric. A non-parametric measure for multivariate numeric targets is recently introduced in~\cite{nguyen:cjs}. Unlike this measure as well as measures of EMM, $\qr$ can handle multivariate targets whose types are a mixed of numeric and ordinal.

\section{Experiments} \label{sec:exp}

\begin{table}[tb]
\centering 
\begin{tabular}{lrrrrr}
\toprule

& & \multicolumn{4}{l}{{\bf Attributes}}\\
\cmidrule{3-6}

{\bf Data} & {\bf Rows} & {\bf Nom.} & {\bf Ord.} & {\bf Num.} & {\bf Total}\\
\otoprule

Adult & 48\,842 & 7 & 1 & 6 & 14\\

Bike & 17\,379 & 5 & 3 & 7 & 15\\

Cover & 581\,012 & 44 & 3 & 7 & 54\\

Gesture & 9\,900 & 1 & 0 & 32 & 33\\

Letter & 20\,000 & 1 & 0 & 16 & 16\\

Bank & 45\,211 & 11 & 2 & 8 & 21\\

Naval & 11\,934 & 0 & 0 & 18 & 18\\

Network & 53\,413 & 1 & 9 & 14 & 24\\

SatImage & 6\,435 & 1 & 0 & 36 & 37\\

Drive & 58\,509 & 1 & 0 & 48 & 49\\

Turkiye & 5\,820 & 0 & 32 & 1 & 33\\

Year & 515\,345 & 1 & 0 & 90 & 91\\

\bottomrule
\end{tabular}
\caption{Characteristics of real-world data sets. `Nom.', 'Ord.', and 'Num.' are abbreviations of respectively `Nominal', `Ordinal', and `Numerical'.} \label{tab:datasets} 
\end{table}

In this section, we empirically evaluate \ourmethod by plugging it into beam search -- a common search scheme of subgroup discovery~\cite{leman:emm,leeuwen:subgroup1,wouter:subgroup1}. We aim at examining if \ourmethod is able to efficiently and effectively discover subgroups of high quality. For a comprehensive assessment, we test with all five quality measures discussed above. As performance metric, we use the average quality of top 50 subgroups. We also study the parameter setting of \ourmethod; this includes the effect of our scalability improvement for $\qr$ measure (see Section~\ref{sec:scale}). We implemented \ourmethod in Java, and make our code available for research purposes.\!\footnote{\codeurl} All experiments were performed single-threaded on an Intel(R) Core(TM) i7-4600U CPU with 16GB RAM. We report wall-clock running times.

We compare \ourmethod to \alter which finds bins optimizing the sum of quality instead of average quality, \ef for equal-frequency binning, and \ew for equal-width binning. As further baselines, we test with state of the art \textit{supervised} discretization \smdl~\cite{fayyad:discr}, \textit{unsupervised univariate} discretization \unml~\cite{petri:mdl}, and \textit{unsupervised multivariate} discretization \ipd~\cite{nguyen:ipd}. For measures that handle univariate targets only ($\WRAcc$ and $\zscore$), we test with \unml and exclude \ipd. For the other three measures, we use \ipd instead. Finally, we include \RocInt~\cite{mampaey:subgroup}, state of the art method on mining binary features for subgroup discovery. For each competitor, whenever applicable we try with different parameter settings and pick the best result. For \ourmethod, by default we set the number of initial bins $\beta = 20$; and when subsampling is used, we set the subsampling rate $\epsilon = 0.1$. We form initial bins $\{c_1, \ldots, c_\beta\}$ by applying equal-frequency binning; this procedure has also been used in~\cite{ReshefEtAl2011,nguyen:ipd,nguyen:mac}.

We experiment  with 12 real-world data sets drawn from the UCI Machine Learning Repository. Their details are in Table~\ref{tab:datasets}. To show that \ourmethod methods are suited to subgroup discovery on large-scale data, 9 data sets we pick have more than 10\,000 records. For brevity, in the following we present the results on 6 data sets with largest sizes: Adult, Cover, Bank, Network, Drive, and Year. For conciseness, we keep our discussion to \wflexi, \kflexi, and \qflexi; we postpone further results to Appendix~\ref{sec:full}.

\subsection{Quality results with $\WRAcc$} \label{sec:wraccres}

As $\WRAcc$ requires univariate binary target, we follow~\cite{wouter:subgroup3} to convert nominal (but non-binary) targets to binary.
The results are in Tables~\ref{tab:wracc}. Here, we display the absolute as well as relative average quality (for other measures we show relative quality only). For the relative quality, the scores of \wflexi are the bases (100\%). Going over the results, we see that \wflexi gives the best average quality in all data sets. Its performance gain over the competitors is up to 300\%. Note that by optimizing average subgroup quality, instead of total quality as \alter does, \wflexi mines better binary features and hence achieves better performance than \alter. \ef, \ew, \smdl, and \unml form binary features oblivious of subgroup quality and perform less well. \RocInt, on the other hand, performs better, but as it forms one feature per attribute at each level of the search it makes the search more sensitive to local optima.

\begin{table*}[tb]
\centering 
\begin{tabular}{lrrrrrrr}
\toprule
{\bf Data} & {\bf \wflexi} & {\bf \alter} & {\bf \ef} & {\bf \ew} & {\bf \smdl} & {\bf \unml} & {\bf \RocInt}\\
\otoprule

Adult & \textbf{0.08 (100)} & 0.07 (88) & 0.07 (88) & 0.07 (88) & 0.07 (88) & 0.06 (75) & 0.07 (88)\\


Cover & \textbf{0.12 (100)} & 0.11 (92) & 0.04 (33) & 0.08 (66) & 0.04 (33) & 0.05 (42) & 0.04 (33)\\



Bank & \textbf{0.04 (100)} & 0.03 (75) & 0.02 (50) & 0.03 (75) & 0.02 (50) & 0.02 (50) & 0.02 (50)\\


Network & \textbf{0.18 (100)} & 0.13 (72) & 0.10 (56) & 0.12 (67) & 0.14 (78) & 0.12 (67) & 0.14 (78)\\


Drive & \textbf{0.11 (100)} & 0.08 (73) & 0.03 (27) & 0.08 (73) & 0.05 (45) & 0.06 (55) & 0.05 (45)\\


Year & \textbf{0.12 (100)} & 0.08 (67) & 0.06 (50) & 0.06 (50) & 0.07 (58) & 0.06 (50) & 0.07 (58)\\

\bottomrule
\end{tabular}
\caption{[Higher is better] Average quality, measured by $\WRAcc$, of top 50 subgroups. We give both the absolute scores, as well as the relative results (in brackets) compared to \wflexi.} \label{tab:wracc} 
\end{table*}

\subsection{Quality results with $\kl$}

We recall that $\kl$ is suited to univariate/multivariate nominal and/or ordinal targets. For Adult and Bank, we use all nominal attributes as targets. For Cover, we randomly select 27 nominal attributes as targets. For Network, we combine nominal and ordinal attributes to create the targets. Drive and Year both have one nominal attribute and no ordinal one. Thus, for each of them we use the nominal attribute as univariate target.

The results are in Table~\ref{tab:kl}. \kflexi achieves the best performance in all data sets. It yields up to 25 times quality improvement compared to competing methods. Note that \smdl and \RocInt both require univariate targets and hence are not applicable to Adult, Cover, Bank, and Network. \kflexi in turn is suited to both univariate and multivariate targets.

\begin{table}[t]
\centering 
\begin{tabular}{lrrrrrrr}
\toprule
{\bf Data} & {\bf \kflexi} & {\bf \alter} & {\bf \ef} & {\bf \ew} & {\bf \smdl} & {\bf \ipd} & {\bf \RocInt}\\
\otoprule

Adult & \textbf{100} & 38 & 37 & 31 & \emph{n/a} & 4 & \emph{n/a}\\


Cover & \textbf{100} & 43 & 64 & 75 & \emph{n/a} & 45 & \emph{n/a}\\



Bank & \textbf{100} & 46 & 62 & 33 & \emph{n/a} & 6 & \emph{n/a}\\

Network & \textbf{100} & 55 & 68 & 55 & \emph{n/a} & 21 & \emph{n/a}\\


Drive & \textbf{100} & 42 & 64 & 85 & 89 & 42 & 62\\


Year & \textbf{100} & 43 & 45 & 42 & 40 & 42 & 74\\

		

\bottomrule
\end{tabular}
\caption{[Higher is better] Average quality, measured by $\kl$, of top 50 subgroups. The results are relative and the quality of \kflexi on each data set is the base (100\%).} \label{tab:kl} 
\end{table}

\subsection{Quality results with $\qr$}

We recall that $\qr$ is suited to univariate/multivariate numeric and/or ordinal targets. In this experiment, we focus on multivariate targets; hence, \smdl and \RocInt are inapplicable. Regarding the setup, for Adult we combine the ordinal attribute and two randomly selected numeric attributes to form targets. For Cover, we pick three ordinal attributes as targets. For Bank, we combine the two ordinal attributes and two randomly selected numeric attributes to create targets. For Network, we randomly sample five ordinal attributes and five numeric attributes to form targets. For Drive and Year, we randomly pick half of the numeric attributes as targets.

To avoid runtimes of more than 5 hours on 
Cover, Network, Drive, and Year, for \textit{all} methods we subsample with $\epsilon = 0.1$. Note that with \ef, \ew, and \ipd, we need to compute subgroup quality after the bins have been formed, which in total is quadratic to the data size $\size$. Thus, to ensure efficiency subsampling is also necessary. For the final subgroups, we use their actual quality for evaluation. Each quality value resulted from using subsampling is the average of 10 runs; standard deviation is small and hence skipped.

The results are in Table~\ref{tab:qr}. We see that \qflexi outperforms all competitors with large margins, improving quality up to 14 times.

\begin{table}[t]
\centering 
\begin{tabular}{lrrrrr}
\toprule
{\bf Data} & {\bf \qflexi} & {\bf \alter} & {\bf \ef} & {\bf \ew} & {\bf \ipd}\\
\otoprule

Adult & \textbf{100} & 18 & 7 & 8 & 23\\


Cover & \textbf{100} & 60 & 41 & 39 & 53\\



Bank & \textbf{100} & 31 & 47 & 59 & 66\\


Network & \textbf{100} & 48 & 69 & 64 & 56\\


Drive & \textbf{100} & 62 & 41 & 59 & 66\\


Year & \textbf{100} & 26 & 27 & 21 & 55\\

\bottomrule
\end{tabular}
\caption{[Higher is better] Average quality, measured by $\qr$, of top 50 subgroups. The results are relative and the quality of \qflexi on each data set is the base (100\%).} \label{tab:qr} 
\end{table}

\subsection{Efficiency results}

We here compare the efficiency of methods that have an advanced way to form binary features; that is, for fairness we skip \ef and \ew. The relative runtime of all remaining methods are shown in Figures~\ref{fig:time_wracc}, \ref{fig:time_kl}, and~\ref{fig:time_qr}. The results of our methods in each case are the bases. We observe that we overall are faster than \RocInt. This could be attributed to the fact that we form initial bins before mining actual features. \RocInt in turn uses the original set of cut points and hence has a larger search space per attribute. We can also see that our methods have comparable runtime to \alter. While in theory \alter is more efficient than our method, it may unnecessarily form too many binary features per attribute, which potentially incurs higher runtime for the whole subgroup discovery process.

\begin{figure*}[tb]
\centering
\subfigure[Runtime with $\WRAcc$]
{{\includegraphics[width=0.3\textwidth]{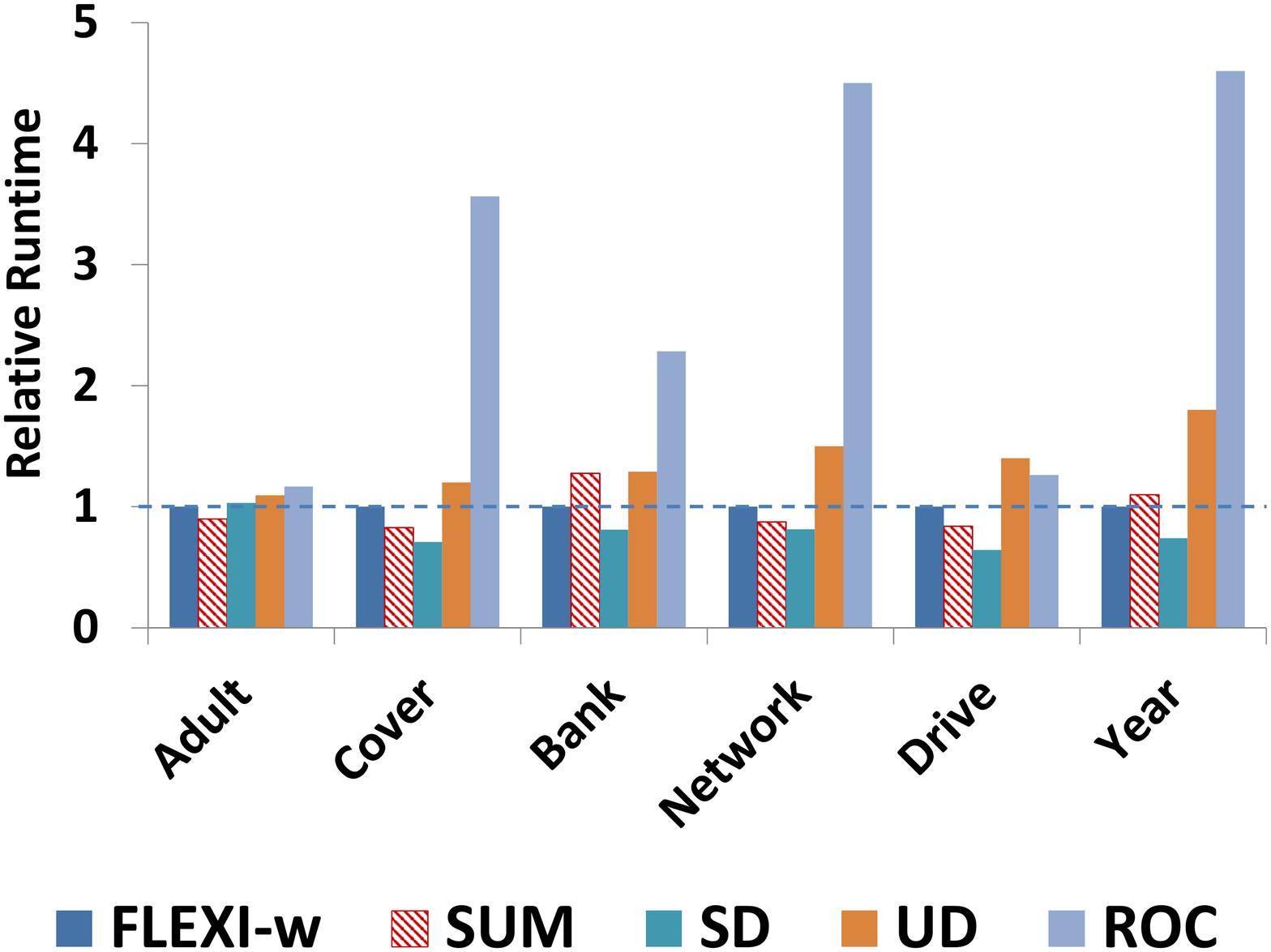}}\label{fig:time_wracc_short}}
\subfigure[Runtime with $\kl$]
{{\includegraphics[width=0.3\textwidth]{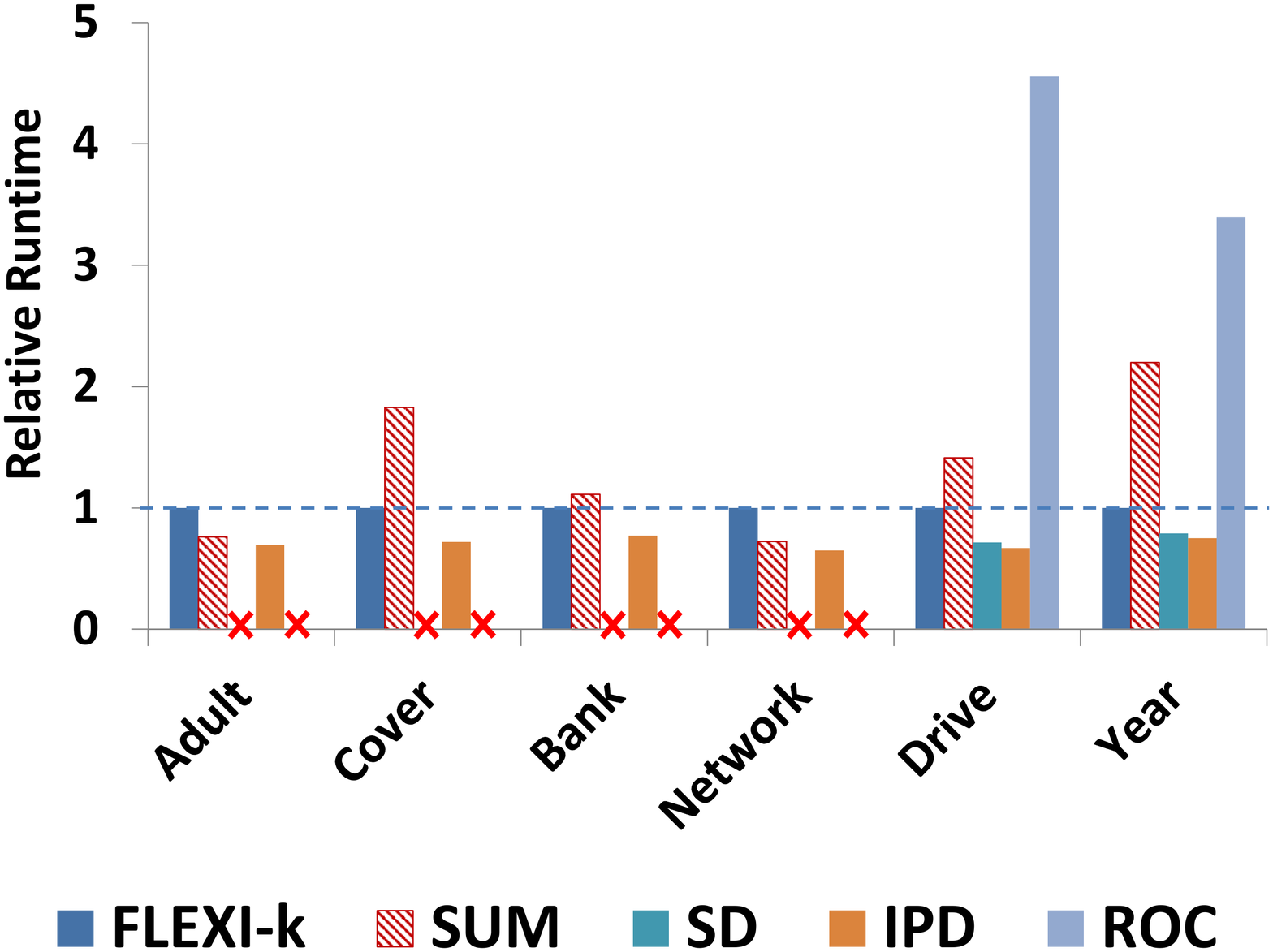}}\label{fig:time_kl_short}}
\subfigure[Runtime with $\qr$]
{{\includegraphics[width=0.3\textwidth]{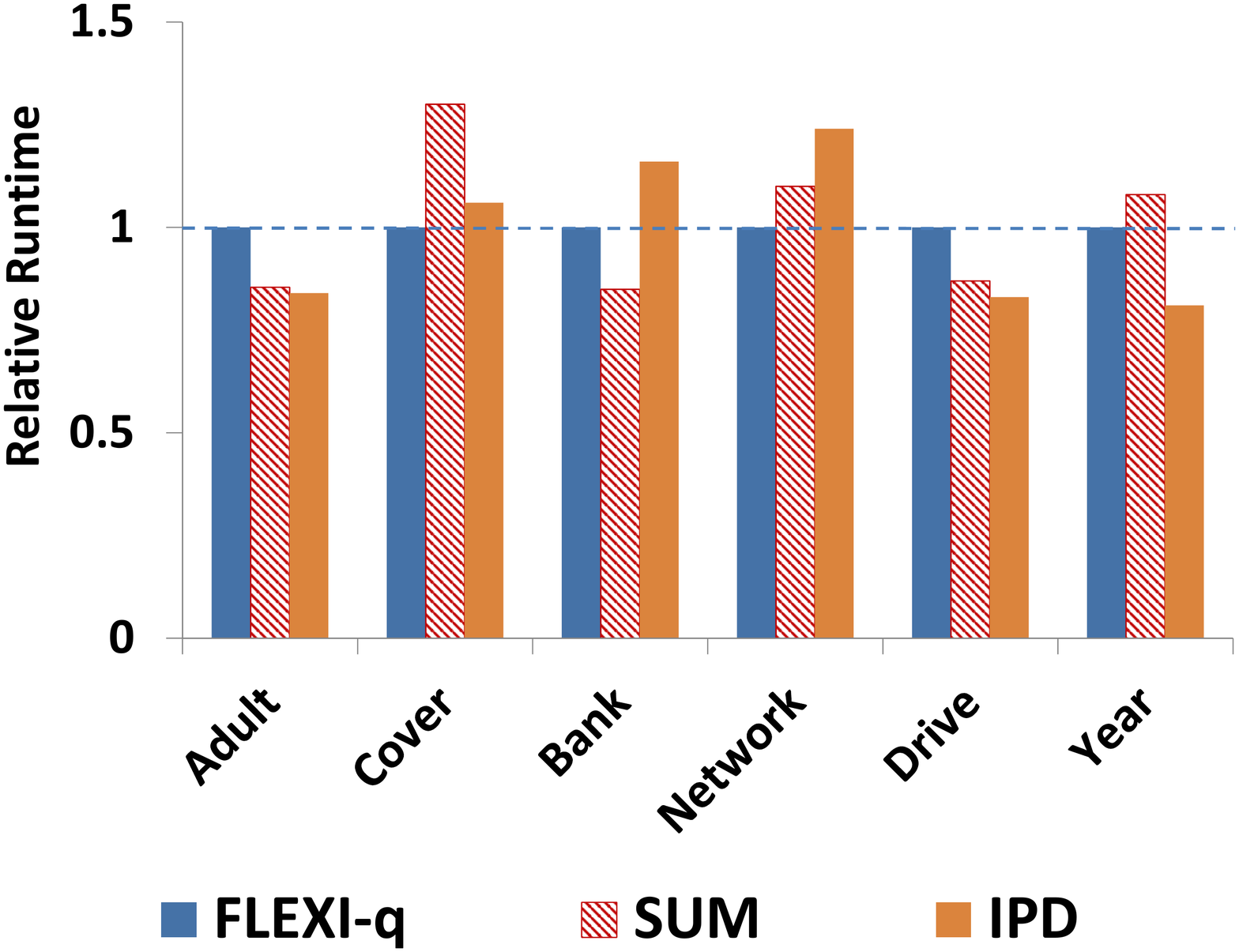}}\label{fig:time_qr_short}}
\caption{[Lower is better] Relative runtime with $\WRAcc$, $\kl$, and $\qr$. In each case the runtime of \ourmethod is the base. \smdl and \RocInt are not applicable to Adult, Cover, Bank, and Network, which is marked by {\bf \color{red}{\xmark}}.}
\end{figure*}

\subsection{Parameter setting}

\ourmethod has two input parameters: the number of initial bins $\beta$ and the subsampling rate $\epsilon$. To assess the sensitivity to $\beta$, we vary it from 5 to 40 with step size being 5. For sensitivity to $\epsilon$, we vary it from 0.05 to 0.2 with step size being 0.05. The default setting is $\beta = 20$ and $\epsilon = 0.1$. The results are in Figures~\ref{fig:quality_vs_beta} and~\ref{fig:quality_vs_epsilon}. For $\beta$, we show representative outcome of \wflexi and \kflexi on Adult and Bank. For $\epsilon$, we show outcome of \qflexi on Network and Drive. We can see that our methods are very stable to parameter setting.

\begin{figure}[tb]
\centering
\subfigure[Quality vs.\ $\beta$]
{{\includegraphics[width=0.23\textwidth]{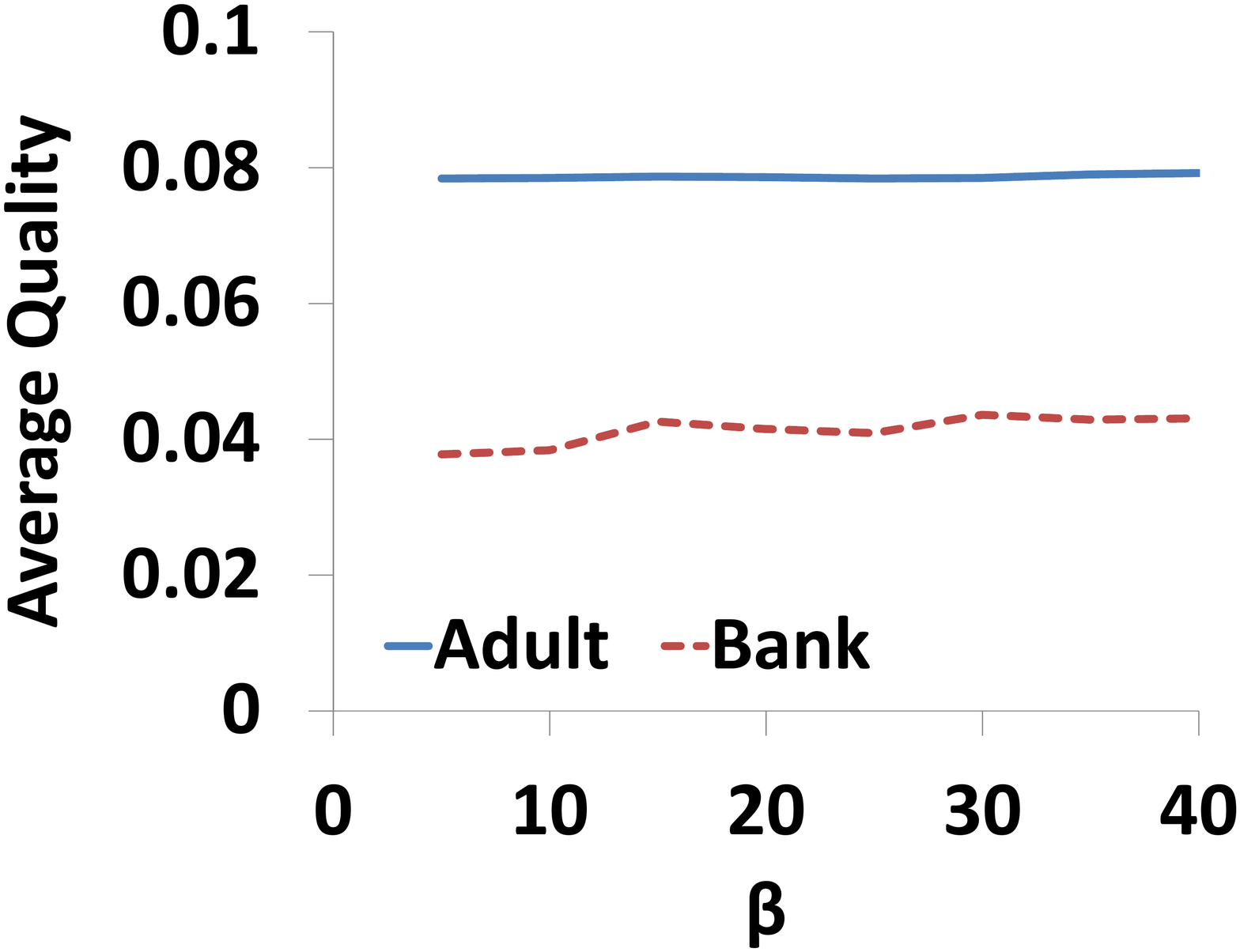}}\label{fig:quality_vs_beta}}
\subfigure[Quality vs.\ $\epsilon$]
{{\includegraphics[width=0.23\textwidth]{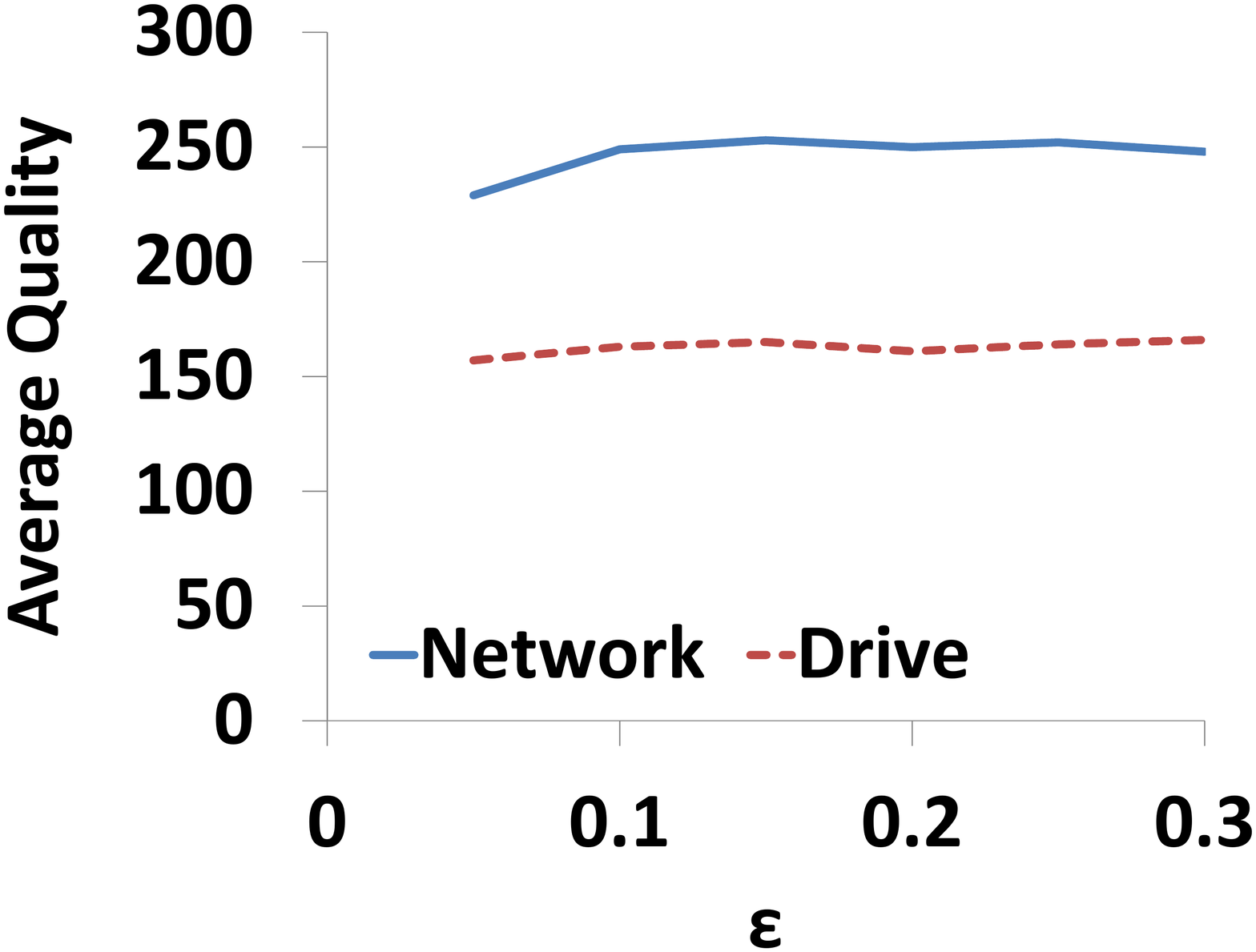}}\label{fig:quality_vs_epsilon}}
\caption{Sensitivity to $\beta$ and $\epsilon$. For $\beta$, we show the results of \wflexi and \kflexi on Adult and Bank. For $\epsilon$, we show the results of \qflexi on Network and Drive.}
\end{figure}

\section{Discussion} \label{sec:dis}

The experiments on different quality measures and real-world data sets show that \ourmethod found subgroups of higher quality than existing methods. In terms of efficiency, it is on par with \alter and faster than \RocInt -- the state of the art for mining binary features for subgroup discovery. The good performance of \ourmethod could be attributed to (1) our formulation of binary feature mining which takes into account subgroup quality, (2) our efficient dynamic programming algorithm which searches for optimal binary features, and (3) our subsampling method to handle very large data sets.

Yet, there is room for alternative methods as well as further improvements. For instance, in addition to beam search it is also interesting to apply \ourmethod to other search paradigms, e.g.\ MDL-based search~\cite{leeuwen:subgroup1}. Along this line, we can also formulate our search problem as mining binary features with high quality that together effectively compress the data. Besides the already demonstrated efficiency of our method, it can be further sped up by parallelization, e.g.\ with MapReduce. This direction in fact is applicable to subgroup discovery in general and is a potential solution towards making methods in this area more applicable to real-world scenarios.

\section{Conclusion} \label{sec:con}

We studied the problem of mining binary features for subgroup discovery. This is challenging as one needs a formulation that allows us to identify features leading to the detection of high quality subgroup. Second, the solution should place no restrictions on the target. Third, it should permit efficient computation. To address these issues, we proposed \ourmethod. In short, \ourmethod aims at identifying binary features per attribute with maximal average quality. The formulation of \ourmethod is abstract from the targets and hence suited to any type of targets. We instantiated \ourmethod with five different measures and showed how to make it efficient in every case. Extensive experiments on various real-world data sets verified that compared to existing methods, \ourmethod is able to efficiently detect subgroups with considerably higher quality.

\section*{Acknowledgements}
The authors are supported by the Cluster of Excellence ``Multimodal Computing and Interaction'' within the Excellence Initiative of the German Federal Government.

\bibliographystyle{abbrv}
\bibliography{bib/abbrev,bib/citation,bib/bib-jilles}

\ifapx
\appendix
\section{Alternative Setting} \label{sec:alternative}

Here we show that the alternate problem formulation can also be solved by dynamic programing. More specifically, let $\dsc$ be the optimal solution and $\{b_\dsc^1, \ldots, b_\dsc^{|\dsc|}\}$ be its bins. It holds that
$$\sum_{i=1}^{|\dsc|} \phi(b_\dsc^i) = \phi(b_\dsc^{|\dsc|}) + \sum_{i=1}^{|\dsc| - 1} \phi(b_\dsc^i).$$
As $\dsc$ is optimal, $\{b_\dsc^1, \ldots, b_\dsc^{|\dsc|-1}\}$ must be the optimal binning for values $A \leq l_\dsc^{|\dsc|}$. Otherwise, we could have chosen a different binning for such values that improves the total quality. This would yield another binning for all values of $A$ that has a total quality higher than that of $\dsc$, which contradicts the assumption on $\dsc$. Hence, the optimal binning $\dsc$ also exhibits optimal substructure, permitting the use of dynamic programming. The detailed solution is in Algorithm~\ref{algo:one}.

\begin{algorithm}[t]
\caption{\textsc{\alter}}
\label{algo:one}
\begin{algorithmic}[1]
\STATE Create initial disjoint bins $\{c_1, \ldots, c_\beta\}$ of $A$

\STATE Create a double array $\qual[1\ldots\beta]$

\STATE Create an array $b[1\ldots\beta]$ whose each entry stores bins

\STATE Set $\qual[1] = \phi(c_1)$ and $b[1] = c_1$

\FOR{$i = 2 \rightarrow \beta$}
	\STATE $\pos = \arg\max\limits_{1 \leq j \leq i-1} \qual[j] + \phi(\bigcup_{k=j+1}^{i} c_k)$
	
	\STATE $\qual[i] = \qual[\pos] + \phi(\bigcup_{k=\pos+1}^{i} c_k)$
	
	\STATE Copy all bins in $b[\pos]$ to $b[i]$
	
	\STATE Add $\bigcup_{k=\pos+1}^{i} c_k$ to $b[i]$
\ENDFOR

\STATE Return $b[\beta]$
\end{algorithmic}
\end{algorithm}



\section{Proofs} \label{sec:proofs}

\begin{proof}[Theorem~\ref{theo:quadem}]
W.l.o.g., we assume that $\T_1, \ldots, \T_l$ are numeric and $\T_{l+1}, \ldots, \T_\dimt$ are ordinal. We have
\begin{align*}
& P(t_1, \ldots, t_d) =\\
& \int_{\minv_1}^{\maxv_1} \ldots \int_{\minv_l}^{\maxv_l} \sum\limits_{t_{l+1} \in \dom(T_{l+1})} \ldots \sum\limits_{t_\dimt \in \dom(T_\dimt)} \mathbf{I}(x_1 \leq t_1) \times \\
&  \cdots \times \mathbf{I}(x_\dimt \leq t_\dimt) \times p(x_1, \ldots, x_\dimt) dx_1 \cdots dx_\dimt.
\end{align*}
Similarly, we have
\begin{align*}
& P_\Sub(t_1, \ldots, t_d) =\\
& \int_{\minv_1}^{\maxv_1} \ldots \int_{\minv_l}^{\maxv_l} \sum\limits_{t_{l+1} \in \dom(T_{l+1})} \ldots \sum\limits_{t_\dimt \in \dom(T_\dimt)} \mathbf{I}(x_1 \leq t_1) \times \\
&  \cdots \times \mathbf{I}(x_\dimt \leq t_\dimt) \times p_\Sub(x_1, \ldots, x_\dimt) dx_1 \cdots dx_\dimt.
\end{align*}
Using empirical data, we have 
$$P(t_1, \ldots, t_d) = \frac{1}{\size} \sum_{i=1}^\size \prod_{k=1}^{\dimt} \mathbf{I}(\D^i_k \leq t_i), \quad  \textrm{ and }$$
$$P_\Sub(t_1, \ldots, t_d) = \frac{1}{s} \sum_{i=1}^s \prod_{k=1}^{\dimt} \mathbf{I}(\D^i_{\Sub,k} \leq t_i).$$
Hence, we have
\begin{align*}
& \id(p_\Sub(\Tb)\; ||\; p(\Tb)) =\\
& \int_{\minv_1}^{\maxv_1} \ldots \int_{\minv_l}^{\maxv_l} \sum\limits_{t_{l+1} \in \dom(T_{l+1})} \ldots \sum\limits_{t_\dimt \in \dom(T_\dimt)} \\
&  \left(\frac{1}{s} \sum_{i=1}^s \prod_{k=1}^{\dimt} \mathbf{I}(\D^i_{\Sub,k} \leq t_i) - \frac{1}{\size} \sum_{i=1}^\size \prod_{k=1}^{\dimt} \mathbf{I}(\D^i_k \leq t_i)\right)^2 dt_1 \cdots dt_l.
\end{align*}
Expanding the above term and bringing the integrals inside the sums, we have
\begin{align*}
& \id(p_\Sub(\Tb)\; ||\; p(\Tb)) =\\
& \frac{1}{s^2} \sum_{i=1}^s \sum_{j=1}^s \left(\prod_{k=1}^{l} \int_{\minv_i}^{\maxv_i} \mathbf{I}(\max(\D_{\Sub,k}^i, \D_{\Sub,k}^j) \leq t_k) dt_k\right) \times\\
& \qquad \qquad \left(\prod_{k=l+1}^{\dimt} \sum\limits_{t_k \in \dom(T_k)} \mathbf{I}(\max(\D_{\Sub,k}^i, \D_{\Sub,k}^j) \leq t_k)\right)\\
& - \frac{2}{s \size} \sum_{i=1}^s \sum_{j=1}^\size \left(\prod_{k=1}^{l} \int_{\minv_i}^{\maxv_i} \mathbf{I}(\max(\D_{\Sub,k}^i, \D_k^j) \leq t_k) dt_k\right) \times \\
& \qquad \qquad \left(\prod_{k=l+1}^{\dimt} \sum\limits_{t_k \in \dom(T_k)} \mathbf{I}(\max(\D_{\Sub,k}^i, \D_k^j) \leq t_k)\right)\\
& + \frac{1}{\size^2} \sum_{i=1}^\size \sum_{j=1}^\size \left(\prod_{k=1}^{l} \int_{\minv_i}^{\maxv_i} \mathbf{I}(\max(\D_k^i, \D_k^j) \leq t_k) dt_k\right) \times\\
& \qquad \qquad \left(\prod_{k=l+1}^{\dimt} \sum\limits_{t_k \in \dom(T_k)} \mathbf{I}(\max(\D_k^i, \D_k^j) \leq t_k)\right)
\end{align*}
by which we arrive at the final result.
\end{proof}

\begin{proof}[Lemma~\ref{lem:quadcom}]
Empirically, we have that
\begin{align*}
& \diff(p_{\Sub \cup \Sube}(\Tb)\; ||\; p(\Tb))\\
&= \frac{1}{(s+r)^2} \sum_{i=1}^{s+r}\sum_{j=1}^{s+r} \prod_{k=1}^{\dimt} h_k(\D_{\Sub \cup \Sube,i}^k, \D_{\Sub \cup \Sube,j}^k) \\
&- \frac{2}{(s+r) \size} \sum_{i=1}^{s+r}\sum_{j=1}^{\size} \prod_{k=1}^{\dimt} h_k(\D_{\Sub \cup \Sube,i}^k, \D_j^k) \\ 
&+ \frac{1}{\size^2} \sum_{i=1}^{\size}\sum_{j=1}^{\size} \prod_{k=1}^{\dimt} h_k(\D_i^k, \D_j^k).
\end{align*}
We can see that the first term is equal to $\frac{1}{(s+r)^2} \Sub.e_1 + \frac{1}{(s+r)^2} \Sube.e_1 + \frac{2}{(s + r)^2} \inter(\Sub, \Sube)$ where $\inter(\Sub, \Sube) = \sum\limits_{i=1}^s \sum\limits_{j=1}^r \prod\limits_{k=1}^\dimt h_k(\D_{\Sub,i}^k, \D_{\Sube,j}^k)$. The second term is equal to $\frac{2}{(s+r) \size} \Sub.e_2 + \frac{2}{(s+r) \size} \Sube.e_2$. The third term is in fact $e$.
\end{proof}

\begin{proof}[Lemma~\ref{lem:inter}]
By definition, we have that
\begin{align*}
&\inter\left(\Sub, \bigcup_{i=1}^l \Sube_i\right)\\
&= \sum_{q=1}^s \sum_{j=1}^{s_1 + \ldots + s_l} \prod_{k=1}^\dimt h_k(\D_{\Sub,q}^k, \D_{\bigcup_{i=1}^l \Sube_i,j}^k)\\
&= \sum_{i=1}^l \sum_{q=1}^s \sum_{j=1}^{s_i} \prod_{k=1}^\dimt h_k(\D_{\Sub,q}^k, \D_{\Sube_i,j}^k)\\
&= \sum_{i=1}^l \inter(\Sub, \Sube_i).
\end{align*}
\end{proof}

\section{Additional Experimental Results} \label{sec:full}

Quality results on all quality measures are in Tables~\ref{tab:wraccfull}, \ref{tab:zscorefull}, \ref{tab:klfull}, \ref{tab:hdfull}, and~\ref{tab:qrfull}. Note that we show absolute values. As Naval has neither categorical nor ordinal attributes, it is not applicable to $\WRAcc$, $\kl$, and $\hd$.

Additional efficiency results are in Figures~\ref{fig:time_wracc}, \ref{fig:time_kl}, and~\ref{fig:time_qr}. Interestingly, on $\qr$ measure, \qflexi is even faster than \ew on 3 data sets. Our explanation is similar to the case of \alter; that is, \ew may form unnecessarily many binary features than required per attribute which prolongs the runtime.

\begin{table*}[t]
\centering 
\begin{tabular}{lrrrrrrr}
\toprule
{\bf Data} & {\bf \wflexi} & {\bf \alter} & {\bf \ef} & {\bf \ew} & {\bf \smdl} & {\bf \unml} & {\bf \RocInt}\\
\otoprule

Adult & \textbf{0.08} & 0.07 & 0.07 & 0.07 & 0.07 & 0.06 & 0.07\\

Bike & \textbf{0.06} & 0.04 & 0.04 & 0.04 & 0.06 & 0.04 & 0.05\\

Cover & \textbf{0.12} & 0.11 & 0.04 & 0.08 & 0.04 & 0.05 & 0.04\\

Gesture & \textbf{0.10} & 0.08 & 0.03 & 0.09 & 0.07 & 0.04 & 0.04\\

Letter & \textbf{0.08} & 0.05 & 0.02 & 0.03 & 0.05 & 0.04 & 0.04\\

Bank & \textbf{0.04} & 0.03 & 0.02 & 0.03 & 0.02 & 0.02 & 0.02\\

Network & \textbf{0.18} & 0.13 & 0.10 & 0.12 & 0.14 & 0.12 & 0.14\\

SatImage & \textbf{0.15} & 0.11 & 0.03 & 0.05 & 0.09 & 0.04 & 0.05\\

Drive & \textbf{0.11} & 0.08 & 0.03 & 0.08 & 0.05 & 0.06 & 0.05\\

Turkiye & \textbf{0.11} & \textbf{0.11} & 0.10 & 0.10 & 0.10 & 0.10 & 0.10\\

Year & \textbf{0.12} & 0.08 & 0.06 & 0.06 & 0.07 & 0.06 & 0.07\\

\midrule

Average & \textbf{0.10} & 0.08 & 0.05 & 0.07 & 0.07 & 0.06 & 0.06\\

\bottomrule
\end{tabular}
\caption{[Higher is better] Average quality, measured by $\WRAcc$, of top 50 subgroups. Best values are in \textbf{bold}.} \label{tab:wraccfull} 
\end{table*}

\begin{table*}[t]
\centering 
\begin{tabular}{lrrrrrr}
\toprule
{\bf Data} & {\bf \zflexi} & {\bf \alter} & {\bf \ef} & {\bf \ew} & {\bf \unml} & {\bf \RocInt}\\
\otoprule

Adult & \textbf{89.44} & 82.14 & 82.14 & 86.04 & 79.62 & 82.14\\

Bike & \textbf{68.61} & 50.44 & 57.54 & 50.24 & 56.25 & 61.50\\

Cover & \textbf{434.97} & 328.43 & 356.44 & 249.49 & 288.29 & 384.48\\

Gesture & 38.09 & 31.38 & 35.32 & 33.55 & 31.42 & \textbf{44.01}\\

Letter & \textbf{47.11} & 41.90 & 43.82 & 39.97 & 40.77 & 44.17\\

Bank & \textbf{78.76} & 69.54 & 72.45 & 71.39 & 66.40 & 72.45\\

Naval & 28.20 & 23.60 & 22.92 & 22.50 & 22.61 & \textbf{32.25}\\

Network & 135.09 & 129.38 & 133.60 & 114.78 & 110.45 & \textbf{145.91}\\

SatImage & \textbf{50.28} & 35.23 & 39.32 & 41.94 & 39.42 & 44.16\\

Drive & \textbf{120.33} & 86.64 & 69.57 & 46.93 & 44.43 & 40.80\\

Turkiye & \textbf{14.56} & 9.53 & 9.53 & 9.54 & 7.10 & 12.37\\

Year & \textbf{88.57} & 57.59 & 47.93 & 53.50 & 50.40 & 60.31\\

\midrule
		
Average & \textbf{99.50} & 78.82 & 80.88 & 68.32 & 69.76 & 85.38\\

\bottomrule
\end{tabular}
\caption{[Higher is better] Average quality, measured by $\zscore$, of top 50 subgroups. Best values are in \textbf{bold}.} \label{tab:zscorefull} 
\end{table*}

\begin{table*}[t]
\centering 
\begin{tabular}{lrrrrrrr}
\toprule
{\bf Data} & {\bf \kflexi} & {\bf \alter} & {\bf \ef} & {\bf \ew} & {\bf \smdl} & {\bf \ipd} & {\bf \RocInt}\\
\otoprule

Adult & \textbf{0.52} & 0.20 & 0.19 & 0.16 & \emph{n/a} & 0.02 & \emph{n/a}\\

Bike & \textbf{0.50} & 0.26 & 0.34 & 0.35 & \emph{n/a} & 0.05 & \emph{n/a}\\

Cover & \textbf{0.53} & 0.23 & 0.34 & 0.40 & \emph{n/a} & 0.24 & \emph{n/a}\\

Gesture & \textbf{0.53} & 0.22 & 0.33 & 0.33 & 0.50 & 0.31 & 0.33\\

Letter & \textbf{0.52} & 0.43 & 0.43 & 0.47 & 0.43 & 0.06 & 0.43\\

Bank & \textbf{0.52} & 0.24 & 0.32 & 0.17 & \emph{n/a} & 0.03 & \emph{n/a}\\

Network & \textbf{0.53} & 0.29 & 0.36 & 0.29 & \emph{n/a} & 0.11 & \emph{n/a}\\

SatImage & \textbf{0.53} & 0.28 & 0.37 & 0.48 & 0.45 & 0.26 & 0.37\\

Drive & \textbf{0.53} & 0.22 & 0.34 & 0.45 & 0.47 & 0.22 & 0.33\\

Turkiye & \textbf{0.53} & 0.50 & 0.50 & 0.50 & \emph{n/a} & 0.15 & \emph{n/a}\\

Year & \textbf{0.53} & 0.23 & 0.24 & 0.22 & 0.21 & 0.22 & 0.39\\

\midrule
		
Average & \textbf{0.52} & 0.28 & 0.34 & 0.35 & 0.19 & 0.16 & 0.17\\

\bottomrule
\end{tabular}
\caption{[Higher is better] Average quality, measured by $\kl$, of top 50 subgroups. Best values are in \textbf{bold}.} \label{tab:klfull} 
\end{table*}

\begin{table*}[t]
\centering 
\begin{tabular}{lrrrrrrr}
\toprule
{\bf Data} & {\bf \hflexi} & {\bf \alter} & {\bf \ef} & {\bf \ew} & {\bf \smdl} & {\bf \ipd} & {\bf \RocInt}\\
\otoprule

Adult & \textbf{0.29} & 0.26 & 0.26 & 0.26 & \emph{n/a} & 0.22 & \emph{n/a}\\

Bike & \textbf{0.27} & 0.10 & 0.14 & 0.22 & \emph{n/a} & 0.25 & \emph{n/a}\\

Cover & \textbf{0.30} & \textbf{0.30} & 0.22 & 0.21 & \emph{n/a} & 0.27 & \emph{n/a}\\

Gesture & 0.29 & 0.08 & 0.14 & \textbf{0.30} & 0.27 & \textbf{0.30} & 0.14\\

Letter & \textbf{0.29} & 0.21 & 0.21 & 0.25 & 0.24 & 0.25 & 0.28\\

Bank & \textbf{0.29} & 0.13 & 0.16 & 0.23 & \emph{n/a} & 0.26 & \emph{n/a}\\

Network & \textbf{0.29} & 0.22 & 0.22 & 0.21 & \emph{n/a} & 0.25 & \emph{n/a}\\

SatImage & \textbf{0.29} & 0.11 & 0.16 & 0.24 & 0.23 & 0.23 & 0.17\\

Drive & 0.29 & 0.08 & 0.14 & 0.28 & 0.29 & \textbf{0.30} & 0.14\\

Turkiye & \textbf{0.29} & 0.26 & 0.26 & 0.26 & \emph{n/a} & 0.26 & \emph{n/a}\\

Year & \textbf{0.29} & 0.25 & 0.12 & 0.14 & 0.14 & 0.22 & 0.15\\

\midrule
		
Average & \textbf{0.29} & 0.18 & 0.18 & 0.24 & 0.11 & 0.26 & 0.08\\

\bottomrule
\end{tabular}
\caption{[Higher is better] Average quality, measured by $\hd$, of top 50 subgroups. Best values are in \textbf{bold}.} \label{tab:hdfull} 
\end{table*}

\begin{table*}[t]
\centering 
\begin{tabular}{lrrrrr}
\toprule
{\bf Data} & {\bf \qflexi} & {\bf \alter} & {\bf \ef} & {\bf \ew} & {\bf \ipd}\\
\otoprule

Adult & \textbf{110.35} & 20.1 & 8.19 & 8.58 & 25.38\\

Bike & \textbf{1.77} & 0.49 & 0.61 & 0.69 & 0.75\\

Cover & \textbf{185.72} & 110.51 & 76.58 & 71.95 & 98.52\\

Gesture & \textbf{3.25} & 0.82 & 1.13 & 2.58 & 2.86\\

Letter & \textbf{0.59} & 0.35 & 0.36 & 0.41 & 0.44\\

Bank & \textbf{41.71} & 13.02 & 19.60 & 24.54 & 27.63\\

Naval & \textbf{0.57} & 0.18 & 0.21 & 0.26 & 0.28\\

Network & \textbf{25.72} & 12.37 & 17.63 & 16.34 & 14.34\\

SatImage & \textbf{3.57} & 1.23 & 2.20 & 1.94 & 2.11\\

Drive & \textbf{6.37} & 3.94 & 2.64 & 3.76 & 4.22\\

Turkiye & \textbf{1.03} & 0.85 & 0.77 & 0.83 & 0.83\\

Year & \textbf{271.98} & 69.41 & 73.07 & 55.95 & 149.43\\

\midrule

Average & \textbf{54.39} & 19.44 & 16.92 & 15.65 & 27.23\\

\bottomrule
\end{tabular}
\caption{[Higher is better] Average quality, measured by $\qr$, of top 50 subgroups. Best values are in \textbf{bold}.} \label{tab:qrfull} 
\end{table*}

\begin{figure*}[tb]
\centering
\subfigure[Runtime with $\WRAcc$]
{{\includegraphics[width=0.3\textwidth]{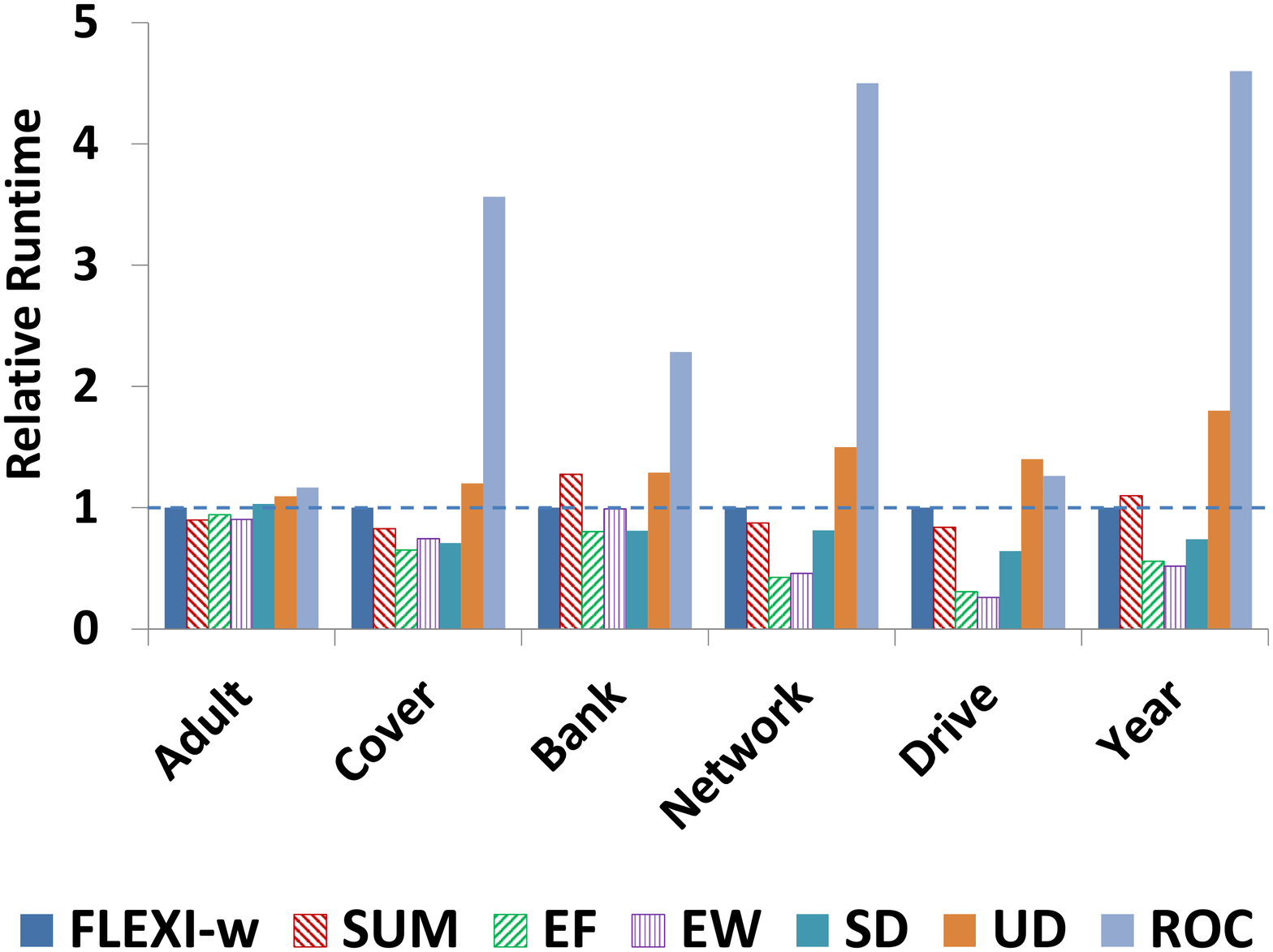}}\label{fig:time_wracc}}
\subfigure[Runtime with $\kl$]
{{\includegraphics[width=0.3\textwidth]{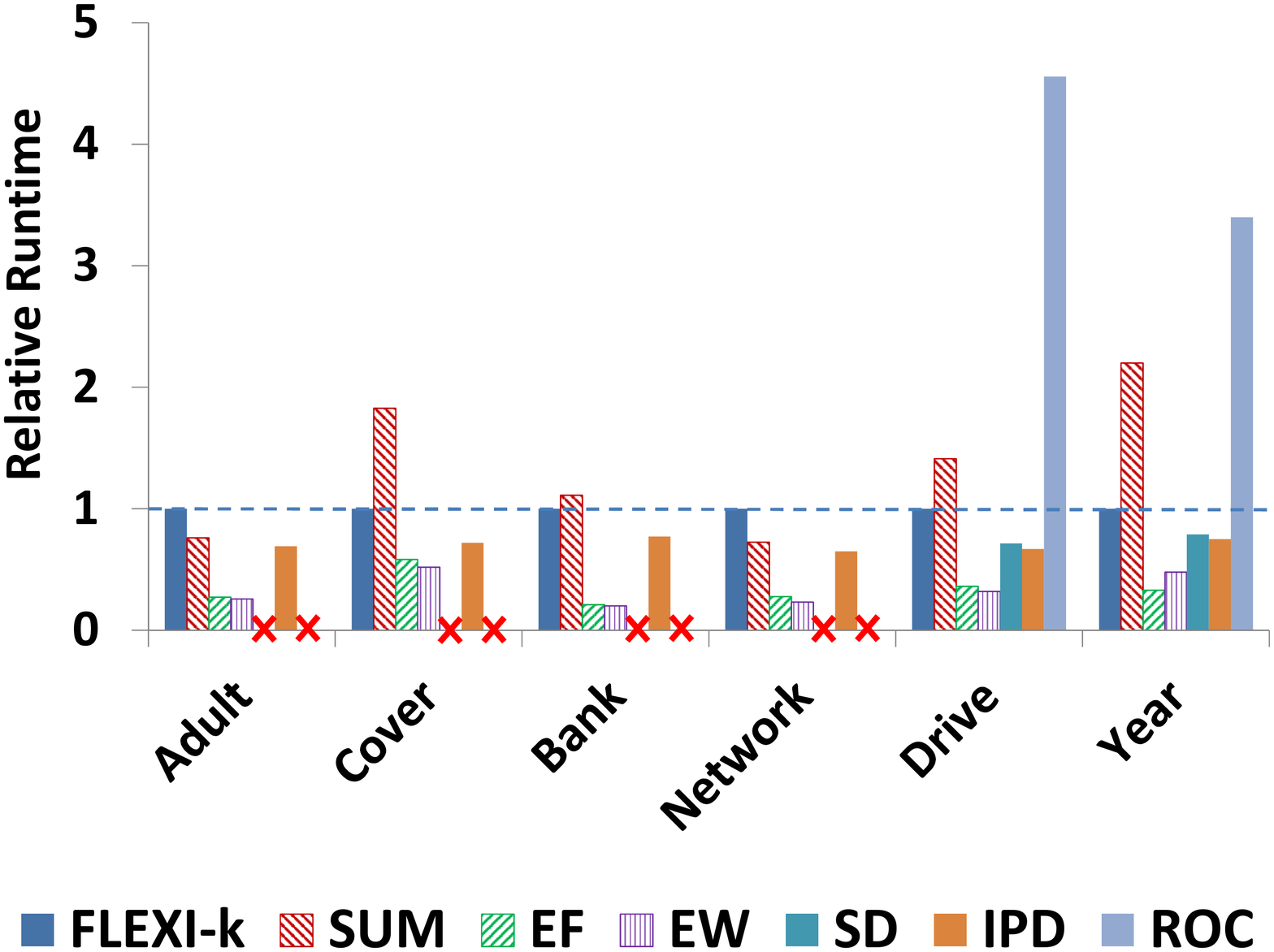}}\label{fig:time_kl}}
\subfigure[Runtime with $\qr$]
{{\includegraphics[width=0.3\textwidth]{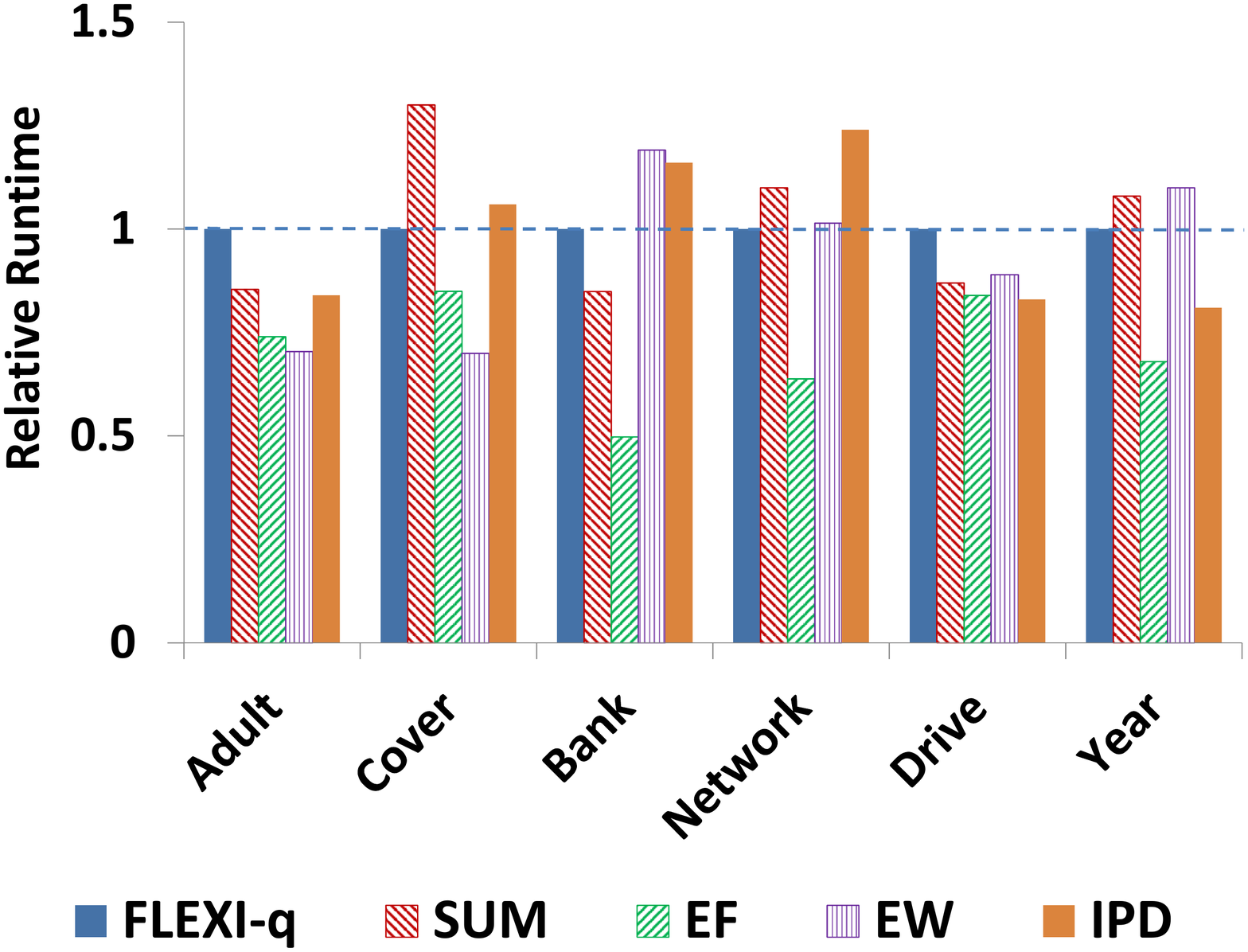}}\label{fig:time_qr}}
\caption{[Lower is better] Relative runtime with $\WRAcc$, $\kl$, and $\qr$. The runtime of our methods in each case is the base. \smdl and \RocInt are not applicable to Adult, Cover, Bank, and Network, which is marked by {\bf \color{red}{\xmark}}.}
\end{figure*}

\fi

\end{document}